\documentclass{eitnlp}
\usepackage{amsthm,array,tikz,pgfplots,algorithm}
\usepackage{adjustbox,colortbl}
\usepackage{placeins}
\usepackage[noend]{algpseudocode}
\usetikzlibrary{arrows.meta,positioning,shapes.geometric}
\usepgfplotslibrary{groupplots}
\pgfplotsset{compat=1.18}
\hypersetup{
  pdftitle={Nucleus Speculative Decoding: Plausibility-Aware Verification Beyond Exact Distribution},
  pdfauthor={Shuhao Li, Fanghua Ye, Wanyu Lin, Tianyu Yuan, Xiaoyu Shen}
}
\definecolor{sdblue}{HTML}{286B99}
\definecolor{nsdorange}{HTML}{D96D25}

\newtheorem{theorem}{Theorem}[section]
\newtheorem{corollary}[theorem]{Corollary}
\theoremstyle{remark}

\theoremstyle{plain}
\newcommand{\Vocab}{\mathcal{V}}

\newcommand{\TV}{D_{\mathrm{TV}}}

\newcommand{\MTPQualityDelta}{-10.63}

\newcommand{\MTPQualityDeltaAbs}{\expandafter\RemoveSign\MTPQualityDelta}
\def\RemoveSign-#1{#1}
\title{Nucleus Speculative Decoding: Plausibility-Aware Verification Beyond Exact Distribution}
\runningtitle{Nucleus Speculative Decoding}
\author{%
  Shuhao Li\textsuperscript{1,2}\quad
  Fanghua Ye\textsuperscript{3}\quad
  Wanyu Lin\textsuperscript{2}\quad
  Tianyu Yuan\textsuperscript{1\textdagger}\quad
  Xiaoyu Shen\textsuperscript{1\textdagger}}
\affiliation{%
  \textsuperscript{1}EIT-NLP Lab, Eastern Institute of Technology, Ningbo\\
  \textsuperscript{2}The Hong Kong Polytechnic University \quad \textsuperscript{3}Independent Researcher}
\authornotes{\textsuperscript{\textdagger}Corresponding authors.}
\contact{%
  \texttt{shuhao.li@connect.polyu.hk}\quad
  \texttt{tyyuan@eitech.edu.cn}\quad
  \texttt{xyshen@eitech.edu.cn}}
\begin{document}
\raggedbottom
\maketitle
\begin{abstract}

Speculative decoding (SD) accelerates autoregressive generation by letting a lightweight draft model propose tokens that a target model verifies in parallel.
However, its standard acceptance rule, designed for exact distribution correction, rejects tokens to which the draft assigns excess probability, even when the target finds them highly plausible: over half of the rejected tokens lie inside the target's own nucleus.
Since verification dominates the cost of SD, these rejections waste its most expensive operation.
We introduce Nucleus Speculative Decoding (NSD), a plausibility-aware verification rule that accepts a draft token if it passes the standard test or belongs to the target nucleus, while keeping SD's residual correction and bonus sampling unchanged.
NSD requires no training or extra forward passes and is compatible with any drafting mechanism.
Because it preserves SD's recovery rule, its deviation from the target distribution can be characterized exactly: we prove that the single-step total variation error equals the draft's excess probability inside the target nucleus, which is precisely NSD's gain in acceptance probability.
NSD thus trades fidelity for acceptance at an exact one-to-one rate, and we further derive sequence-level fidelity bounds.
Across six target models and five mathematical reasoning and code generation benchmarks, NSD achieves the highest throughput among all compared methods, with speedups of up to $5.16\times$ over autoregressive decoding and $3.15\times$ over standard SD, while improving accuracy over SD on five of the six models and nearly halving verification time per output token.
Our code is available at \url{https://github.com/EIT-NLP/Nucleus-Speculative-Decoding}.
\end{abstract}

\section{Introduction}
Large language models (LLMs) have demonstrated strong capabilities in language understanding, reasoning, and code generation \citep{chen2021,su2022welm,guo2025r1,yang2025qwen3,deepseekai2026deepseekv4}, but autoregressive (AR) decoding remains inherently sequential and computationally expensive \citep{miao2023specinfer,zhang2024draftverify}.
Speculative decoding (SD) alleviates this bottleneck by letting a lightweight draft model propose a block of tokens that the target model verifies in parallel \citep{leviathan2023,chen2023accelerating}.
Despite substantial progress in draft models \citep{chen2026dflash,li2024eagle,gloeckle2024mtp,li2025eagle3,cheng2026dspark}, target verification still dominates the cost of SD \citep{liu2026speculativedecodingperformanceillusion}, accounting for \(57.1\%\)--\(78.1\%\) of wall-clock time (Figure~\ref{fig:systems_and_rescue_opportunity}(a)).
Moreover, verification stops at the first rejection and discards the rest of the block \citep{leviathan2023,stern2018blockwise}, so the speedup hinges on how many tokens each verification pass can commit.

\begin{figure}[t]
\centering
\includegraphics[width=\linewidth]{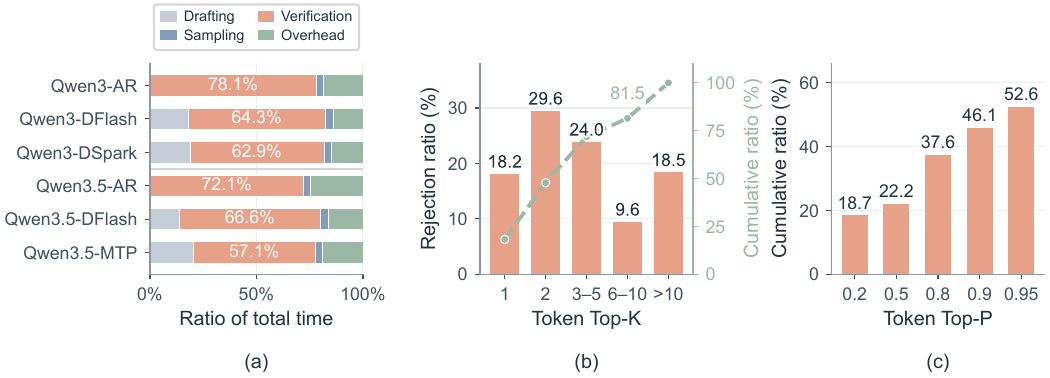}
\caption{Execution time breakdown and rejection analysis of speculative decoding.
(a) Execution time breakdown. Ratio of total time spent on drafting, verification, sampling, and overhead.
(b) Target Top-K rank from rejected tokens.
(c) Target Top-P distribution from rejected tokens.
}
\label{fig:systems_and_rescue_opportunity}
\end{figure}

Many of these rejections, however, are unnecessary.
SD accepts a draft token \(x \sim q\) with probability \(\min\{1, p(x)/q(x)\}\), where \(p\) is the target distribution \citep{chen2023accelerating}.
This rule is designed for exact distribution correction: it rejects \(x\) whenever the draft over-allocates probability to it (\(q(x) > p(x)\)), regardless of how plausible \(x\) is under the target.
As shown in Figure~\ref{fig:systems_and_rescue_opportunity}(b) and (c), \(81.5\%\) of rejected tokens rank within the target's top ten, and \(52.6\%\) lie inside the target nucleus at \(0.95\) \citep{holtzman2020}, i.e., among the tokens the target itself could produce under nucleus sampling.
SD thus spends its most expensive operation \emph{discarding tokens that the target considers acceptable}.

This observation motivates relaxing verification at the cost of exact distribution matching.
Recent relaxed verifiers pursue this goal by redesigning the verifier itself, introducing learned judges, logit-margin rules, regret budgets, or constrained acceptance distributions \citep{bachmann2025judge,song2026,feng2026asd,hao2026}.
In doing so, they either require additional training or replace SD's stochastic residual correction with a new recovery mechanism, so their deviation from the target distribution is controlled only indirectly rather than characterized exactly.
This raises a natural question: can we accept plausible proposals while leaving SD's sampling machinery untouched, and quantify exactly what this relaxation costs in fidelity?

We answer this question with \textbf{Nucleus Speculative Decoding} (NSD), a plausibility-aware verification rule that accepts a draft token if it passes the standard ratio test \emph{or} lies in the target nucleus \(S_\theta\), the smallest set of highest-probability target tokens whose cumulative probability reaches a threshold \(\theta\).
All other components of SD, including residual correction and bonus sampling, \emph{remain unchanged}.
Since the nucleus is computed from target probabilities already produced during verification, NSD requires no training or extra forward pass, and it composes with any drafting paradigm \citep{li2024eagle,li2024eagle2,li2025eagle3,gloeckle2024mtp,deepseekv3,chen2026dflash,cheng2026dspark,fu2026nemotron}.

Because NSD keeps SD's recovery rule, its output distribution admits a closed form, so the cost of the relaxation can be characterized exactly rather than merely bounded.
We prove that the single-step total variation distance \citep{gibbs2002choosingboundingprobabilitymetrics} between NSD's output distribution and the target distribution equals \(E_S = \sum_{v \in S_\theta} [q(v) - p(v)]_+\), the draft's excess probability inside the target nucleus, which is also exactly NSD's gain in acceptance probability over SD.
NSD thus trades fidelity for acceptance at an \emph{exact one-to-one rate}: the deviation never exceeds SD's own rejection probability, vanishes whenever \(q \le p\) on \(S_\theta\), and grows monotonically with \(\theta\).
We further derive a sequence-level fidelity bound via maximal coupling \citep{d3b61654b1204385b1a2e31f324d0229}.

Across six Nemotron and Qwen3 target models and five mathematical reasoning and code generation benchmarks, NSD achieves the highest throughput among all compared methods in every comparison, with speedups of up to \(5.16\times\) over AR and \(3.15\times\) over SD (\(4.23\times\) and \(1.85\times\) on average), while improving overall accuracy over SD on five of the six models (\(+0.40\) points on average).
Its benefits also transfer to DFlash, EAGLE-3, and multi-token prediction drafters.

These gains stem directly from the mechanism motivated above.
The tokens rescued by NSD are highly plausible under the target: \(94.1\%\) rank within its top five, and \(86.7\%\) lie in the core rather than at the boundary of the nucleus.
By retaining them, NSD reduces verification calls per output token by \(47.8\%\) at nearly unchanged per-call latency, cutting verification time per output token by \(48.3\%\) relative to SD and by \(17.3\%\)--\(20.9\%\) relative to other relaxed verifiers.
Together, these results show that accepting plausible tokens, rather than only distribution-corrected ones, amortizes the dominant cost of SD over substantially more tokens, offering a simple and theoretically grounded path toward faster speculative decoding.

\section{Related Work}
\label{sec:related_work}

\paragraph{Speculative Decoding.}
Early blockwise parallel decoding predicts several positions together and
retains the longest prefix validated by a scoring model \citep{stern2018blockwise}.
Speculative decoding accelerates autoregressive generation by using a
lightweight draft model to propose future tokens that will be verified in
parallel by the target model
\citep{leviathan2023,chen2023accelerating}.
For a draft proposal sampled from distribution $q$, speculative decoding accepts tokens with probability $\min\{1,p(x)/q(x)\}$ and applies residual correction upon rejection to preserve the target distribution
$p$.
Its efficiency depends on the number of draft tokens retained from
verification.
SpecInfer organizes proposals into token trees for parallel target model
verification \citep{miao2023specinfer}, while Draft \& Verify generates
proposals by skipping target model layers \citep{zhang2024draftverify}.

Recent advances have improved speculative decoding through stronger draft models, including feature-based drafters
\citep{li2024eagle,li2024eagle2,li2025eagle3}, multi-token prediction
modules \citep{gloeckle2024mtp,deepseekv3}, parallel diffusion drafting
\citep{chen2026dflash}, semi-autoregressive proposal strategies
\citep{cheng2026dspark}, and unified autoregressive diffusion models
\citep{fu2026nemotron}.
These approaches primarily improve the quality or efficiency of draft
generation. In contrast, NSD modifies the verification procedure and is complementary to existing drafting mechanisms. Therefore, NSD can
be applied across different speculative decoding paradigms.

\paragraph{Relaxed Verification.}
Several works relax exact verification to improve acceptance
rates under an accuracy-efficiency trade-off.
The lenience extension of speculative decoding modifies the likelihood
ratio criterion, while typical acceptance and related methods introduce
target probability or ranking thresholds
\citep{leviathan2023,cai2024}.
Other approaches explore adaptive verification rules, approximate sampling, learned judges, or regret constraints
\citep{song2026,urban2026adaptive,hao2026,bachmann2025,holsman2025,feng2026asd}.
These methods generally rely on learned components or optimization objectives.
Training-free verification based on draft-target KL divergence has also
been explored as an alternative to learned judges \citep{sun2026}. Inspired by nucleus sampling \citep{holtzman2020}, NSD introduces a
different relaxation: a draft token is accepted if it satisfies the
standard acceptance rule or belongs to the target nucleus.
Unlike approaches that replace the verification criterion, NSD retains the
original residual correction and bonus sampling mechanisms, providing a
simple nucleus-based acceptance condition using target probabilities
already computed during verification.

\section{Nucleus Speculative Decoding}
\label{sec:method}

We introduce Nucleus Speculative Decoding (NSD), a simple yet effective method for speculative decoding.
NSD accepts a draft proposal if it passes the standard 
acceptance verification or belongs to the target model's top-p set. This method can extend the acceptance length using target
probabilities already computed during verification. Figure~\ref{fig:nsd_verification_overview} illustrates the workflow of our method.
\begin{figure}[t]
\centering
\includegraphics[width=\linewidth]{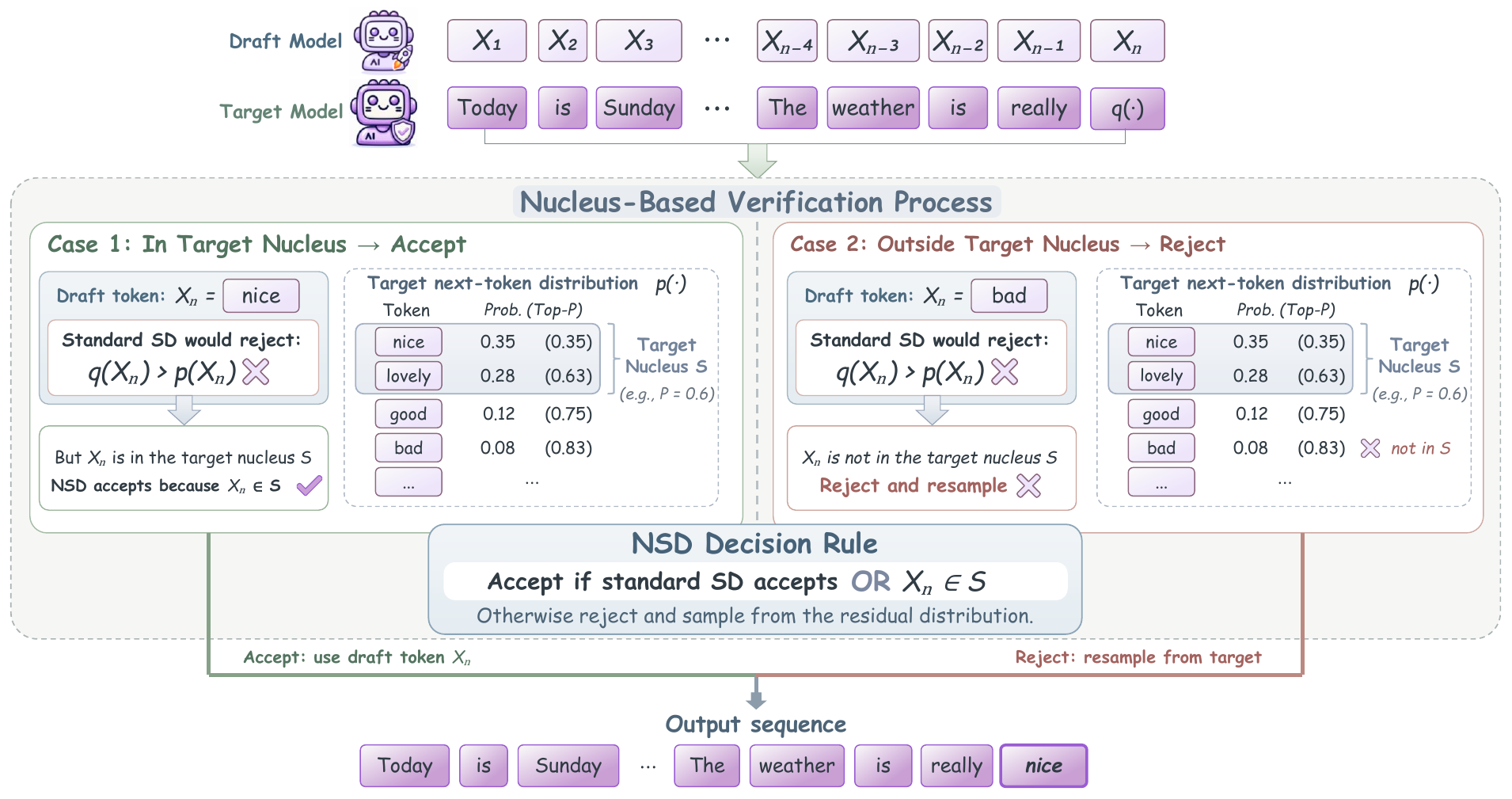}
\caption{Overview of Nucleus Speculative Decoding (NSD).}
\label{fig:nsd_verification_overview}
\end{figure}

\paragraph{Background on Speculative Decoding}
\label{sec:nsd_setup}

Let
$p(\cdot|h)$ and $q(\cdot|h)$ denote the target and draft
distributions conditioned on a prefix $h$. Given a committed prefix
$h_0$, the draft model generates a block of $K$ candidate tokens,
and the target model evaluates their conditional probabilities in
parallel. To preserve the exact target distribution, standard speculative
decoding accepts each draft token $x_i$ using the stochastic rule \(a_i^{\mathrm{SD}}(x_i)=\min\{1,p_i(x_i)/q_i(x_i)\}\) where $p_i$ and $q_i$ denote the target and draft distributions at
position $i$. Verification continues until the first rejection, at which point a replacement token is sampled from the residual distribution. If all draft tokens are accepted, a bonus
token is sampled.

Although standard speculative decoding provides exact distribution
preservation, its efficiency depends on the number of draft tokens
surviving verification. Moreover, a token can be rejected when
$q_i(x_i)>p_i(x_i)$ even if it remains highly ranked under the target
distribution. NSD addresses this limitation by introducing an
additional acceptance condition based on target nucleus verification.

\paragraph{Nucleus-Based Verification}

NSD augments speculative decoding using target nucleus verification. For each
verification position, we construct a nucleus from the target
distribution $p_i$ using a nucleus threshold $\theta\in(0,1)$:

\begin{equation}
S_i(\theta)=\{v_{i,(1)},\ldots,v_{i,(k_i(\theta))}\}
\quad\text{where}\quad
k_i(\theta)=\min\left\{k:\sum_{j=1}^{k}p_i(v_{i,(j)})\geq\theta\right\}.
\label{eq:nsd_nucleus}
\end{equation}

The nucleus adapts to the target distribution at each position and
is only used as an additional acceptance condition. NSD does not
renormalize $p_i$ or $q_i$ over $S_i(\theta)$. When $q_i(x_i)>p_i(x_i)$, standard speculative decoding may reject $x_i$ even when
$x_i\in S_i(\theta)$. NSD accepts such proposals through the additional
nucleus-based condition: \(
b_i
=
\mathbf{1}\bigl\{
u_i \leq a_i^{\mathrm{SD}}(x_i)
\lor x_i\in S_i(\theta)
\bigr\}.
\)

Algorithm~\ref{alg:nsd} gives one verification cycle. Here $q_i$ is the
actual conditional proposal distribution, and both $p_i$ and $q_i$ include
sampling transformations. Nucleus ties use a fixed ordering. The generation
wrapper stops at EOS or the output budget and discards uncommitted cache
states. The implementation-specific numerical safeguards are described separately in Appendix~\ref{app:method_candidate}.

\begin{algorithm}[H]
\caption{Nucleus Speculative Decoding: one verification cycle}
\label{alg:nsd}
\small
\begin{algorithmic}[1]
\Require Token prefix $h_0$; draft length $K$; nucleus threshold $\theta\in(0,1)$;
         target model $p$; draft model $q$

\Ensure Newly emitted token block $o$ (at most $K+1$ tokens)
\State Generate $x_{1:K}$ with the draft mechanism; retain the actual
       conditional proposal distributions $q_{1:K}$
\State Evaluate $p_i=p(\cdot\mid h_0,x_{<i})$ for $i=1,\ldots,K+1$
       in one target verification pass
\State $o \gets ()$

\For{$i=1,\ldots,K$}
    \State Construct $S_i(\theta)$ from $p_i$ using Equation~\eqref{eq:nsd_nucleus}
    \State Sample $u_i\sim\mathrm{Uniform}(0,1)$; set $a_i^{\mathrm{SD}}(x_i)\gets\min\{1,p_i(x_i)/q_i(x_i)\}$

    \If{$u_i \le a_i^{\mathrm{SD}}(x_i)$ \textbf{or} $x_i \in S_i(\theta)$}
        \State Append $x_i$ to $o$
    \Else
        \State $\rho_i(v) \gets
        \dfrac{[p_i(v)-q_i(v)]_+}
        {\sum_{w\in\mathcal{V}}[p_i(w)-q_i(w)]_+}$
        \State Sample $y\sim\rho_i$; append $y$ to $o$; \Return $o$
    \EndIf
\EndFor

\State Sample bonus token $y\sim p_{K+1}$; append $y$ to $o$; \Return $o$
\end{algorithmic}
\end{algorithm}

\paragraph{Distributional Fidelity}
\label{sec:nsd_local_tradeoff}

We characterize the distributional deviation introduced by
nucleus-based acceptance.
Let \(S=S_\theta(p)\) denote the target nucleus, and let
\(D_{\mathrm{TV}}(r,s)=\frac{1}{2}\sum_v |r(v)-s(v)|\).
Define
\begin{equation}
    R
    =D_{\mathrm{TV}}(q,p)
    =\sum_{v\in\mathcal{V}}[q(v)-p(v)]_+,
    \quad
    E_S
    =\sum_{v\in S}[q(v)-p(v)]_+.
    \label{eq:nsd_excess_mass}
\end{equation}

\(R\) is the rejection probability of standard speculative
decoding, while \(E_S\) is the draft excess probability mass within
the target nucleus. Under a shared proposal \(X\sim q\) under uniform distribution, \(E_S\) is exactly the probability that NSD accepts a proposal rejected by standard SD. Therefore,
\(\bar a_{\mathrm{SD}}=1-R,\;
\bar a_{\mathrm{NSD}}=1-R+E_S\),
and the increase in proposal acceptance is
\(\bar a_{\mathrm{NSD}}-\bar a_{\mathrm{SD}}=E_S\).

For \(R>0\), let \(\rho(v)=[p(v)-q(v)]_+/R\) denote the SD residual distribution.
Since NSD only rejects the draft excess outside the nucleus,
residual resampling decreases from \(R\) to \(R-E_S\), so
\begin{equation}
    \widetilde p(v)
    =
    \min\{p(v),q(v)\}
    +\mathbf{1}\{v\in S\}[q(v)-p(v)]_+
    +(R-E_S)\rho(v).
    \label{eq:nsd_output_distribution}
\end{equation}
Subtracting $p(v)$ gives
\(
    \widetilde p(v)-p(v)
    =
    \mathbf{1}\{v\in S\}[q(v)-p(v)]_+
    -
    E_S\rho(v).
\)
The two terms have disjoint support: the first is nonzero only
where $q(v)>p(v)$, while $\rho(v)$ is supported only where
$p(v)>q(v)$. Hence
\begin{equation}
    D_{\mathrm{TV}}(\widetilde p,p)
    =
    E_S
    =
    \bar a_{\mathrm{NSD}}-\bar a_{\mathrm{SD}}.
    \label{eq:nsd_local_identity}
\end{equation}
Equation~\eqref{eq:nsd_local_identity} shows that the extra acceptance of NSD is exactly its local deviation from the target. Since \(E_S\le R\), this deviation is no larger than the rejection probability of standard SD.

We next give a conditional sequence-level guarantee. Let \(K_t(\cdot\mid h)\) denote the actual next-emitted-token distribution of the decoder, with internal state marginalized conditional on the emitted history \(h\). If \(D_{\mathrm{TV}}(K_t(\cdot\mid h),p_t(\cdot\mid h))\le\epsilon_t\) for every relevant history, then a sequential coupling gives
\[
D_{\mathrm{TV}}\!\left(P_{\mathrm{NSD}}^{(T)},P_p^{(T)}\right)
\le 1-\prod_{t=1}^{T}(1-\epsilon_t).
\]
Applying this bound to a particular speculative backend requires establishing the conditional-kernel assumption, including proposal selection, recovery, bonus sampling, and decoder state.
Equivalently, NSD and target decoding can generate identical sequences
under a coupling with probability at least
\(\prod_{t=1}^{T}(1-\epsilon_t)\). For a uniform per-token error bound
\(\epsilon_t\leq\epsilon\), this becomes
\begin{equation}
\Pr\bigl(X_{1:T}^{\mathrm{NSD}}=X_{1:T}^{\mathrm{target}}\bigr)
\geq (1-\epsilon)^T
\geq 1-T\epsilon .
\end{equation}
Therefore, when the nucleus-based relaxation introduces only small
single-token deviations, NSD remains close to target-model distribution. For illustration, $\epsilon\le10^{-3}$ and $T=32$ imply coupling agreement
at least $(0.999)^{32}\approx0.9685$. Detailed proofs are provided in
Appendix~\ref{app:proofs_candidate}.

\section{Experiments}
\label{sec:experiments}

\subsection{Experimental Setup}

\paragraph{Models.}

We evaluate NSD across five different proposal mechanisms and four model sizes, covering Nemotron-Labs-Diffusion-3B, Nemotron-Labs-Diffusion-8B, Nemotron-Labs-Diffusion-14B \citep{fu2026nemotron}, and Qwen3 models at 4B, 8B, and 14B sizes \citep{yang2025qwen3}. In our main experiments, Nemotron employs linear self-speculation with 32-token proposal blocks, while Qwen3 is paired with the DSpark draft model \citep{cheng2026dspark} using 7-token blocks. To evaluate whether NSD generalizes across different speculative decoding paradigms, we additionally conduct experiments with DFlash \citep{chen2026dflash} and EAGLE-3 \citep{li2025eagle3} on Qwen3, as well as MTP on Qwen3.5 \citep{qwenteam2026qwen35omnitechnicalreport} . These additional results are reported in Appendix~\ref{app:additional_proposals}.

\paragraph{Evaluation.}
We evaluate mathematical reasoning on GSM8K \citep{cobbe2021gsm8k}, MATH500, a subset of MATH \citealp{hendrycks2021}), AIME24, AIME25, and AIME26, as well as functional code generation on MBPP \citep{austin2021} and HumanEval \citep{chen2021}.

\paragraph{Metrics.}
We evaluate both task performance and efficiency. Task performance reports accuracy for mathematical reasoning tasks and functional pass@1 for MBPP and HumanEval. Speedup Ratio is the ratio of each method's token throughput to target model autoregressive (AR) throughput, where AR is normalized to $1.00\times$. The mean accepted length, denoted by $\tau$, is the number of tokens emitted per verification cycle, which includes accepted draft tokens and any resampling or bonus token. So $\tau$ can exceed the draft block size by up to one token.

\paragraph{Baselines.}
We use target model autoregressive decoding as the primary reference for generation quality and latency. We compare Nucleus Sampling (NS), which modifies the sampling distribution through temperature and nucleus truncation. Standard speculative decoding (SD) \citep{leviathan2023,chen2023accelerating} serves as the exact distribution-preserving acceleration baseline. We further compare with relaxed speculative verification methods, including ASD \citep{feng2026asd}, Cactus \citep{hao2026}, and MARS \citep{song2026}, which trade exact verification guarantees for improved acceptance or efficiency. 
As for decoding parameters, all methods use temperature $T=1.0$. For Nucleus Sampling, we use $top-p=0.95$ following the common decoding configuration.
For NSD, we set the nucleus threshold $\theta=0.95$ in the main experiments.

\subsection{Main Results}

\begin{table}[p]
\centering
\caption{Speedup, mean accepted length and accuracy performance overview. AR denotes autoregressive sampling; NS denotes nucleus sampling on AR mode; Diff denotes diffusion-based parallel decoding. Spd.: throughput speedup relative to target-model AR; $\tau$: mean tokens accepted per draft-block verification, including any resampling token; Acc.: math accuracy or code pass@1 (\%). Blue rows denote NSD; bold numbers mark the highest speedup and largest $\tau$ within each comparison.}
\label{tab:speedup}
\begingroup
\small
\setlength{\tabcolsep}{1pt}
\renewcommand{\arraystretch}{1.08}
\begin{adjustbox}{max width=\linewidth}
\begin{tabular}{@{}llccc@{\hspace{8pt}}ccc@{\hspace{8pt}}ccc@{\hspace{8pt}}ccc@{\hspace{8pt}}ccc@{}}
\toprule
& & \multicolumn{3}{c}{GSM8K} & \multicolumn{3}{c}{MATH500} & \multicolumn{3}{c}{AIME} & \multicolumn{3}{c}{HumanEval} & \multicolumn{3}{c}{MBPP} \\
\cmidrule(lr){3-5}\cmidrule(lr){6-8}\cmidrule(lr){9-11}\cmidrule(lr){12-14}\cmidrule(lr){15-17}
Model & Method & Spd. & $\tau$ & Acc. & Spd. & $\tau$ & Acc. & Spd. & $\tau$ & Acc. & Spd. & $\tau$ & Acc. & Spd. & $\tau$ & Acc. \\
\midrule
\textbf{Nemo-3B} & AR & $1.00\times$ & 1.00 & 87.95 & $1.00\times$ & 1.00 & 74.60 & $1.00\times$ & 1.00 & 8.00 & $1.00\times$ & 1.00 & 70.73 & $1.00\times$ & 1.00 & 56.03 \\
 & NS & $0.99\times$ & 1.00 & 88.32 & $0.99\times$ & 1.00 & 76.16 & $0.99\times$ & 1.00 & 10.00 & $0.99\times$ & 1.00 & 74.63 & $0.99\times$ & 1.00 & 57.04 \\
 & Diff & $2.67\times$ & - & 90.07 & $2.59\times$ & - & 73.60 & $2.44\times$ & - & 3.33 & $1.63\times$ & - & 68.90 & $1.52\times$ & - & 56.81 \\
 & SD & $2.73\times$ & 7.30 & 88.63 & $2.17\times$ & 7.49 & 74.52 & $2.05\times$ & 6.01 & 10.00 & $1.66\times$ & 5.61 & 73.41 & $1.52\times$ & 5.04 & 54.86 \\
 & MARS & $3.00\times$ & 9.75 & 85.97 & $3.60\times$ & 9.75 & 73.20 & $3.45\times$ & 9.12 & 11.11 & $2.63\times$ & 6.97 & 74.39 & $2.54\times$ & 6.58 & 54.09 \\
 & ASD & $3.69\times$ & 9.58 & 85.22 & $3.69\times$ & 10.09 & 75.80 & $3.53\times$ & 9.21 & 13.33 & $2.50\times$ & 6.61 & 75.61 & $2.36\times$ & 6.12 & 56.81 \\
 & CACT & $3.87\times$ & 10.17 & 85.37 & $3.82\times$ & 10.79 & 69.80 & $3.68\times$ & 9.54 & 5.56 & $2.90\times$ & 7.77 & 72.56 & $2.78\times$ & 7.26 & 52.14 \\
\rowcolor{sdblue!7}
 & \textbf{NSD} & $\mathbf{3.96\times}$ & \textbf{10.44} & 88.26 & $\mathbf{4.42\times}$ & \textbf{11.51} & 75.96 & $\mathbf{3.95\times}$ & \textbf{10.24} & 13.33 & $\mathbf{3.08\times}$ & \textbf{7.98} & 75.24 & $\mathbf{3.08\times}$ & \textbf{7.52} & 55.25 \\
\addlinespace[2pt]
\midrule
\addlinespace[1pt]
\textbf{Nemo-8B} & AR & $1.00\times$ & 1.00 & 89.84 & $1.00\times$ & 1.00 & 82.20 & $1.00\times$ & 1.00 & 24.44 & $1.00\times$ & 1.00 & 79.76 & $1.00\times$ & 1.00 & 61.09 \\
 & NS & $1.00\times$ & 1.00 & 90.90 & $1.00\times$ & 1.00 & 82.40 & $0.98\times$ & 1.00 & 28.89 & $1.01\times$ & 1.00 & 80.12 & $0.97\times$ & 1.00 & 60.31 \\
 & Diff & $2.02\times$ & - & 93.93 & $3.06\times$ & - & 83.80 & $2.83\times$ & - & 26.67 & $1.68\times$ & - & 82.07 & $1.36\times$ & - & 61.48 \\
 & SD & $2.20\times$ & 5.64 & 93.56 & $1.50\times$ & 5.59 & 87.72 & $1.49\times$ & 5.08 & 27.78 & $1.50\times$ & 4.51 & 81.46 & $1.38\times$ & 4.08 & 59.53 \\
 & MARS & $4.18\times$ & 10.52 & 93.03 & $3.38\times$ & 9.61 & 87.80 & $3.39\times$ & 10.12 & 28.89 & $2.94\times$ & 7.69 & 79.27 & $2.65\times$ & 6.68 & 57.59 \\
 & ASD & $4.08\times$ & 10.37 & 92.70 & $4.19\times$ & 11.90 & 88.60 & $3.59\times$ & 10.48 & 33.33 & $3.43\times$ & 8.82 & 81.10 & $2.92\times$ & 7.39 & 61.87 \\
 & CACT & $4.57\times$ & 12.01 & 88.64 & $4.14\times$ & 11.69 & 82.80 & $3.57\times$ & 10.69 & 30.00 & $3.99\times$ & 10.44 & 80.24 & $3.53\times$ & 9.70 & 58.37 \\
\rowcolor{sdblue!7}
 & \textbf{NSD} & $\mathbf{4.87\times}$ & \textbf{12.21} & 92.57 & $\mathbf{4.40\times}$ & \textbf{13.18} & 88.56 & $\mathbf{4.50\times}$ & \textbf{12.87} & 31.11 & $\mathbf{4.13\times}$ & \textbf{10.98} & 81.95 & $\mathbf{3.83\times}$ & \textbf{9.87} & 62.26 \\
\addlinespace[2pt]
\midrule
\addlinespace[1pt]
\textbf{Nemo-14B} & AR & $1.00\times$ & 1.00 & 93.48 & $1.00\times$ & 1.00 & 84.80 & $1.00\times$ & 1.00 & 33.33 & $1.00\times$ & 1.00 & 80.24 & $1.00\times$ & 1.00 & 60.70 \\
 & NS & $1.01\times$ & 1.00 & 94.23 & $1.00\times$ & 1.00 & 85.00 & $1.00\times$ & 1.00 & 32.22 & $0.99\times$ & 1.00 & 80.85 & $1.00\times$ & 1.00 & 61.48 \\
 & Diff & $2.29\times$ & - & 93.25 & $2.37\times$ & - & 84.80 & $2.93\times$ & - & 20.00 & $1.49\times$ & - & 79.14 & $1.54\times$ & - & 63.04 \\
 & SD & $2.31\times$ & 5.80 & 94.31 & $1.72\times$ & 5.11 & 88.36 & $1.31\times$ & 4.23 & 36.67 & $1.58\times$ & 4.39 & 77.20 & $1.40\times$ & 3.84 & 61.48 \\
 & MARS & $4.73\times$ & 12.00 & 92.80 & $3.82\times$ & 10.24 & 87.40 & $3.45\times$ & 9.99 & 31.11 & $3.17\times$ & 8.00 & 82.32 & $2.72\times$ & 6.73 & 58.37 \\
 & ASD & $4.62\times$ & 11.87 & 91.58 & $4.21\times$ & 11.09 & 86.80 & $3.30\times$ & 9.59 & 34.44 & $3.50\times$ & 8.78 & 81.71 & $2.84\times$ & 6.92 & 59.53 \\
 & CACT & $4.87\times$ & 12.20 & 90.14 & $4.19\times$ & 11.26 & 84.40 & $3.32\times$ & 9.37 & 32.22 & $3.92\times$ & 10.19 & 77.44 & $3.64\times$ & 9.02 & 58.75 \\
\rowcolor{sdblue!7}
 & \textbf{NSD} & $\mathbf{5.16\times}$ & \textbf{12.64} & 92.87 & $\mathbf{4.38\times}$ & \textbf{12.71} & 87.82 & $\mathbf{4.12\times}$ & \textbf{11.36} & 36.67 & $\mathbf{4.24\times}$ & \textbf{11.27} & 81.34 & $\mathbf{3.92\times}$ & \textbf{9.74} & 59.92 \\
\addlinespace[2pt]
\midrule
\addlinespace[1pt]
\textbf{Qwen-4B} & AR & $1.00\times$ & 1.00 & 89.84 & $1.00\times$ & 1.00 & 83.00 & $1.00\times$ & 1.00 & 20.22 & $1.00\times$ & 1.00 & 85.12 & $1.00\times$ & 1.00 & 61.63 \\
 & NS & $0.99\times$ & 1.00 & 89.31 & $0.99\times$ & 1.00 & 84.20 & $1.06\times$ & 1.00 & 14.89 & $0.99\times$ & 1.00 & 85.24 & $0.99\times$ & 1.00 & 62.65 \\
 & SD & $3.73\times$ & 6.18 & 90.90 & $2.87\times$ & 5.63 & 84.36 & $3.16\times$ & 4.39 & 21.11 & $3.01\times$ & 5.46 & 83.90 & $2.90\times$ & 5.14 & 62.96 \\
 & MARS & $4.11\times$ & 6.87 & 91.58 & $3.29\times$ & 6.03 & 84.00 & $3.41\times$ & 5.49 & 18.89 & $3.49\times$ & 5.92 & 81.71 & $3.35\times$ & 5.54 & 62.65 \\
 & ASD & $3.84\times$ & 6.44 & 89.39 & $3.09\times$ & 5.65 & 83.80 & $3.09\times$ & 5.02 & 20.00 & $3.37\times$ & 5.72 & 81.71 & $3.16\times$ & 5.27 & 59.92 \\
 & CACT & $4.12\times$ & 6.99 & 88.62 & $3.63\times$ & 6.68 & 79.00 & $4.09\times$ & 6.35 & 13.33 & $3.74\times$ & 6.30 & 81.10 & $3.57\times$ & 5.90 & 61.48 \\
\rowcolor{sdblue!7}
 & \textbf{NSD} & $\mathbf{4.28\times}$ & \textbf{7.17} & 89.77 & $\mathbf{3.86\times}$ & \textbf{6.84} & 84.10 & $\mathbf{4.13\times}$ & \textbf{6.53} & 20.00 & $\mathbf{3.93\times}$ & \textbf{6.43} & 82.32 & $\mathbf{3.81\times}$ & \textbf{6.02} & 62.10 \\
\addlinespace[2pt]
\midrule
\addlinespace[1pt]
\textbf{Qwen-8B} & AR & $1.00\times$ & 1.00 & 92.04 & $1.00\times$ & 1.00 & 83.60 & $1.00\times$ & 1.00 & 20.00 & $1.00\times$ & 1.00 & 87.80 & $1.00\times$ & 1.00 & 63.81 \\
 & NS & $0.99\times$ & 1.00 & 93.32 & $1.00\times$ & 1.00 & 84.00 & $1.02\times$ & 1.00 & 18.89 & $0.99\times$ & 1.00 & 84.76 & $0.99\times$ & 1.00 & 64.59 \\
 & SD & $4.16\times$ & 6.32 & 91.66 & $3.25\times$ & 5.70 & 84.64 & $3.04\times$ & 4.36 & 17.78 & $3.35\times$ & 5.46 & 87.44 & $3.25\times$ & 5.20 & 66.15 \\
 & MARS & $4.35\times$ & 6.68 & 90.91 & $3.12\times$ & 5.98 & 84.00 & $3.32\times$ & 5.51 & 20.00 & $3.79\times$ & 5.86 & 84.76 & $3.56\times$ & 5.56 & 63.04 \\
 & ASD & $4.26\times$ & 6.50 & 93.56 & $3.36\times$ & 5.63 & 85.60 & $2.94\times$ & 4.92 & 21.11 & $3.59\times$ & 5.54 & 85.36 & $3.52\times$ & 5.35 & 64.12 \\
 & CACT & $4.44\times$ & 7.09 & 91.91 & $4.02\times$ & 6.78 & 82.40 & $3.88\times$ & 6.38 & 12.22 & $4.21\times$ & 6.49 & 85.98 & $4.07\times$ & 6.15 & 64.98 \\
\rowcolor{sdblue!7}
 & \textbf{NSD} & $\mathbf{4.76\times}$ & \textbf{7.24} & 92.30 & $\mathbf{4.31\times}$ & \textbf{6.87} & 86.40 & $\mathbf{4.12\times}$ & \textbf{6.50} & 21.11 & $\mathbf{4.35\times}$ & \textbf{6.58} & 86.46 & $\mathbf{4.14\times}$ & \textbf{6.25} & 64.59 \\
\addlinespace[2pt]
\midrule
\addlinespace[1pt]
\textbf{Qwen-14B} & AR & $1.00\times$ & 1.00 & 93.93 & $1.00\times$ & 1.00 & 87.40 & $1.00\times$ & 1.00 & 22.22 & $1.00\times$ & 1.00 & 85.98 & $1.00\times$ & 1.00 & 66.93 \\
 & NS & $0.99\times$ & 1.00 & 92.80 & $0.99\times$ & 1.00 & 87.60 & $1.00\times$ & 1.00 & 18.89 & $0.99\times$ & 1.00 & 90.24 & $1.00\times$ & 1.00 & 66.54 \\
 & SD & $4.53\times$ & 6.31 & 93.78 & $3.66\times$ & 5.68 & 86.88 & $3.10\times$ & 4.25 & 25.56 & $3.66\times$ & 5.40 & 89.51 & $3.63\times$ & 5.25 & 67.78 \\
 & MARS & $4.81\times$ & 6.74 & 93.56 & $3.98\times$ & 5.95 & 85.60 & $3.40\times$ & 5.37 & 21.11 & $4.05\times$ & 5.74 & 89.63 & $4.02\times$ & 5.59 & 68.09 \\
 & ASD & $4.61\times$ & 6.44 & 93.78 & $3.76\times$ & 5.56 & 86.80 & $2.98\times$ & 4.71 & 26.67 & $3.86\times$ & 5.46 & 89.02 & $3.82\times$ & 5.31 & 68.09 \\
 & CACT & $5.02\times$ & 7.12 & 92.19 & $4.56\times$ & 6.79 & 84.80 & $4.00\times$ & 6.35 & 17.78 & $4.55\times$ & 6.44 & 88.41 & $4.42\times$ & 6.15 & 68.09 \\
\rowcolor{sdblue!7}
 & \textbf{NSD} & $\mathbf{5.15\times}$ & \textbf{7.22} & 93.18 & $\mathbf{4.75\times}$ & \textbf{6.87} & 87.40 & $\mathbf{4.16\times}$ & \textbf{6.53} & 24.44 & $\mathbf{4.64\times}$ & \textbf{6.53} & 90.24 & $\mathbf{4.47\times}$ & \textbf{6.25} & 68.48 \\
\bottomrule
\end{tabular}
\end{adjustbox}
\endgroup
\end{table}

\paragraph{Consistent throughput speedup.}
Table~\ref{tab:speedup} shows that NSD achieves the highest reported
speedup among the evaluated methods in all comparisons. Across models, the mean speedups range from $3.70\times$ to $4.63\times$ relative to AR decoding.
For Nemotron models, NSD achieves overall speedups of
$3.70\times$, $4.35\times$, and $4.36\times$.
The gains also extend to Qwen3 with DSpark drafting:
NSD increases the overall speedups to $4.00\times$, $4.34\times$, and $4.63\times$
for Qwen3-4B, -8B, and -14B respectively.
Averaging the paired throughput ratios across all 30 comparisons gives
mean improvements of $323.0\%$ over AR and $84.98\%$ over SD.
These results demonstrate efficiency gains under both evaluated
proposal mechanisms, different model sizes and diverse model backbones.

\paragraph{Increased mean accepted length.}
The throughput gains coincide with increases in the reported mean accepted length $\tau$ across every model and benchmark group.
For Nemotron-3B, -8B, and -14B, the overall $\tau$ increases from
$6.29$, $4.98$, and $4.67$ under SD to
$9.54$, $11.82$, and $11.54$ under NSD.
For Qwen3-4B, -8B and -14B, the corresponding values increase from
$5.36$, $5.41$ and $5.38$ to $6.60$, $6.69$ and $6.68$.
NSD also achieves the largest reported $\tau$ among the compared
verification methods.
These observations are consistent with NSD retaining more draft tokens.

\paragraph{Task performance trade-offs.}
Task performance varies across models and benchmark groups.
NSD achieves higher reported overall accuracy than SD on five models with increases ranging from
$0.05$ to $1.32$ percentage points.
Qwen3-4B is the exception, where overall accuracy decreases from
$68.65\%$ to $67.66\%$ as the overall speedup increases from
$3.13\times$ to $4.00\times$.
Across the 30 model-benchmark comparisons, NSD exceeds SD in
15 accuracy entries, ties one, and falls below SD in 14.
Across all comparisons, NSD's accuracy difference relative to SD ranges from \(-1.58\) to \(+4.14\) percentage points, with NSD achieving 0.40 percentage points higher accuracy on average. These comparisons show consistent throughput
gains alongside task-dependent quality changes.

\subsection{Ablation Study}

\begin{table}[t]
\centering\footnotesize
\caption{Effect of draft block size $K$ and nucleus threshold $\theta$ on speedup, mean accepted length, and accuracy. Spd. denotes speedup relative to target-model AR; $\tau$ is the mean accepted length per draft block verification; Acc. is math accuracy or code pass@1 (\%).}
\label{tab:topp_ablation}
\begingroup
\setlength{\tabcolsep}{2pt}
\renewcommand{\arraystretch}{1.08}
\resizebox{\linewidth}{!}{%
\begin{tabular}{@{}lcccc@{\hspace{6pt}}ccc@{\hspace{6pt}}ccc@{\hspace{6pt}}ccc@{\hspace{6pt}}ccc@{}}
\toprule
& & \multicolumn{15}{c}{Nucleus threshold $\theta$} \\
\cmidrule(lr){3-17}
& & \multicolumn{3}{c}{0.95} & \multicolumn{3}{c}{0.925} & \multicolumn{3}{c}{0.90} & \multicolumn{3}{c}{0.875} & \multicolumn{3}{c}{0.85} \\
\cmidrule(lr){3-5}\cmidrule(lr){6-8}\cmidrule(lr){9-11}\cmidrule(lr){12-14}\cmidrule(l){15-17}
Dataset & $K$ & Spd. & $\tau$ & Acc. & Spd. & $\tau$ & Acc. & Spd. & $\tau$ & Acc. & Spd. & $\tau$ & Acc. & Spd. & $\tau$ & Acc. \\
\midrule
\multicolumn{17}{l}{\textbf{Nemo-8B}} \\
GSM8K & 8 & $2.92\times$ & 6.61 & 91.66 & $2.89\times$ & 6.61 & 91.66 & $2.83\times$ & 6.67 & 91.66 & $2.69\times$ & 6.51 & 92.42 & $2.65\times$ & 6.54 & 92.87 \\
 & 16 & $4.35\times$ & 10.70 & 91.58 & $3.63\times$ & 9.42 & 93.48 & $4.21\times$ & 9.66 & 93.94 & $3.93\times$ & 9.56 & 93.10 & $4.06\times$ & 9.46 & 92.72 \\
 & 24 & $4.55\times$ & 11.90 & 92.87 & $4.60\times$ & 11.74 & 90.90 & $4.65\times$ & 12.09 & 90.14 & $4.70\times$ & 12.03 & 90.98 & $4.89\times$ & 12.47 & 91.58 \\
 & 32 & $4.87\times$ & 12.21 & 92.57 & $5.04\times$ & 13.03 & 91.43 & $5.36\times$ & 13.57 & 93.63 & $4.88\times$ & 12.26 & 91.74 & $5.30\times$ & 13.53 & 92.42 \\
\addlinespace[5pt]
MBPP & 8 & $2.72\times$ & 6.17 & 61.48 & $2.53\times$ & 6.23 & 59.53 & $2.70\times$ & 6.27 & 60.70 & $2.48\times$ & 6.04 & 59.14 & $2.57\times$ & 6.20 & 62.26 \\
 & 16 & $3.52\times$ & 8.15 & 59.92 & $3.47\times$ & 7.93 & 59.92 & $3.43\times$ & 7.82 & 58.75 & $3.30\times$ & 7.56 & 60.31 & $3.26\times$ & 7.77 & 62.65 \\
 & 24 & $3.79\times$ & 9.79 & 58.75 & $3.92\times$ & 9.67 & 58.37 & $3.77\times$ & 9.48 & 59.53 & $3.94\times$ & 9.66 & 58.75 & $3.65\times$ & 8.88 & 62.26 \\
 & 32 & $3.83\times$ & 9.87 & 62.26 & $3.88\times$ & 9.46 & 60.63 & $3.95\times$ & 9.66 & 59.92 & $3.77\times$ & 9.06 & 61.48 & $3.61\times$ & 8.88 & 61.88 \\
\addlinespace[5pt]
\midrule
\multicolumn{17}{l}{\textbf{Nemo-14B}} \\
GSM8K & 8 & $2.91\times$ & 6.65 & 90.90 & $2.97\times$ & 6.81 & 91.74 & $2.93\times$ & 6.74 & 92.42 & $2.70\times$ & 6.54 & 93.18 & $2.98\times$ & 6.61 & 94.01 \\
 & 16 & $3.82\times$ & 9.33 & 90.14 & $4.04\times$ & 9.83 & 91.58 & $3.99\times$ & 9.79 & 92.95 & $4.16\times$ & 10.08 & 93.03 & $4.26\times$ & 10.29 & 93.78 \\
 & 24 & $4.66\times$ & 11.40 & 90.45 & $4.67\times$ & 11.12 & 92.57 & $4.25\times$ & 10.62 & 92.04 & $4.42\times$ & 10.91 & 92.95 & $4.56\times$ & 11.19 & 93.48 \\
 & 32 & $5.16\times$ & 12.64 & 92.87 & $5.00\times$ & 12.53 & 92.80 & $5.11\times$ & 12.31 & 92.70 & $4.76\times$ & 11.69 & 93.10 & $4.86\times$ & 11.81 & 93.86 \\
\addlinespace[5pt]
MBPP & 8 & $2.59\times$ & 6.11 & 58.37 & $2.60\times$ & 6.29 & 59.53 & $2.56\times$ & 5.95 & 63.42 & $2.56\times$ & 5.92 & 62.26 & $2.56\times$ & 5.95 & 63.81 \\
 & 16 & $3.40\times$ & 8.16 & 60.70 & $3.32\times$ & 7.89 & 61.48 & $3.39\times$ & 8.10 & 62.57 & $3.31\times$ & 7.91 & 61.48 & $3.25\times$ & 8.03 & 64.20 \\
 & 24 & $3.68\times$ & 9.08 & 59.14 & $3.81\times$ & 9.53 & 61.87 & $3.57\times$ & 8.73 & 61.87 & $3.37\times$ & 8.15 & 60.70 & $3.53\times$ & 8.63 & 63.04 \\
 & 32 & $3.92\times$ & 9.74 & 59.92 & $3.59\times$ & 8.87 & 60.31 & $3.59\times$ & 8.86 & 62.65 & $3.70\times$ & 9.19 & 61.87 & $3.58\times$ & 8.89 & 62.26 \\
\addlinespace[5pt]
\bottomrule
\end{tabular}
}
\endgroup

\end{table}

We vary the nucleus threshold $\theta$ over five values: $0.85$, $0.875$,
$0.90$, $0.925$, and $0.95$. We also vary the draft block length $K$
over $8$, $16$, $24$, and $32$ tokens.
Table~\ref{tab:topp_ablation} reports the joint effects on decoding
efficiency and task performance for Nemotron-8B and Nemotron-14B
on GSM8K and MBPP.

\paragraph{Effect of draft block length.}
Increasing $K$ from $8$ to $32$ improves both speedup and mean
acceptance length for both models and benchmarks at every tested
nucleus threshold.
At $\theta=0.95$, the GSM8K speedup increases from
$2.92\times$ to $4.87\times$ for Nemotron-8B and from
$2.91\times$ to $5.16\times$ for Nemotron-14B.
The corresponding mean accepted lengths increase from
$6.61$ to $12.21$ and from $6.65$ to $12.64$.
On MBPP, the same block-length change increases speedup from
$2.72\times$ to $3.83\times$ for Nemotron-8B and from
$2.59\times$ to $3.92\times$ for Nemotron-14B.
These results show that longer draft blocks allow NSD to retain
more tokens per verification. The larger speedup gains on GSM8K than on MBPP further suggest that the benefit of increasing K varies across tasks.

\paragraph{Diminishing returns from longer blocks.}
The marginal benefit of increasing $K$ depends on the task.
For GSM8K, increasing $K$ from $24$ to $32$ improves throughput
for both models at all thresholds.
However, the gains often diminish on MBPP.
For Nemotron-8B at $\theta=0.95$, this increase changes speedup
only from $3.79\times$ to $3.83\times$ and $\tau$ from
$9.79$ to $9.87$.
For Nemotron-14B at $\theta=0.925$, speedup instead decreases
from $3.81\times$ to $3.59\times$, accompanied by a decrease
in $\tau$ from $9.53$ to $8.87$.
Thus, extending the proposed block does not always
increase $\tau$ or improve throughput.

\paragraph{Sensitivity to the nucleus threshold.}
The effect of $\theta$ is model- and task-dependent rather than
uniformly monotonic.
For Nemotron-14B at $K=32$, reducing $\theta$ from $0.95$ to $0.85$
increases GSM8K accuracy from $92.87\%$ to $93.86\%$ and
MBPP pass@1 from $59.92\%$ to $62.26\%$.
These improvements are accompanied by lower speedups:
$5.16\times$ to $4.86\times$ on GSM8K and
$3.92\times$ to $3.58\times$ on MBPP.
In contrast, for Nemotron-8B on GSM8K at $K=32$,
reducing $\theta$ from $0.95$ to $0.90$ improves both speedup
and accuracy, from $4.87\times$ to $5.36\times$ and from
$92.57\%$ to $93.63\%$, while increasing $\tau$ from
$12.21$ to $13.57$.
The same threshold change on MBPP increases speedup from
$3.83\times$ to $3.95\times$ but decreases pass@1 from
$62.26\%$ to $59.92\%$.
A threshold adjustment that improves both metrics on one task
can therefore introduce a quality-efficiency trade-off on another.

\paragraph{Joint configuration trade-offs.}
The configurations maximizing throughput do not necessarily
maximize task performance.
Although the highest observed speedup occurs at $K=32$, the highest MBPP pass@1 for both models
occurs at $K=16$ and $\theta=0.85$.
This setting achieves $62.65\%$ pass@1 with a $3.26\times$
speedup for Nemotron-8B and $64.20\%$ pass@1 with a
$3.25\times$ speedup for Nemotron-14B.
Taken together, the ablations show that block length and nucleus
threshold should be considered jointly, as larger blocks generally
favor throughput by increasing the number of draft tokens that can be
retained per verification step, whereas the preferred nucleus threshold
depends on the model and task and can yield a more favorable balance
between efficiency and performance.

\subsection{Analysis}

\begin{figure}[t]
\centering
\includegraphics[width=\linewidth]{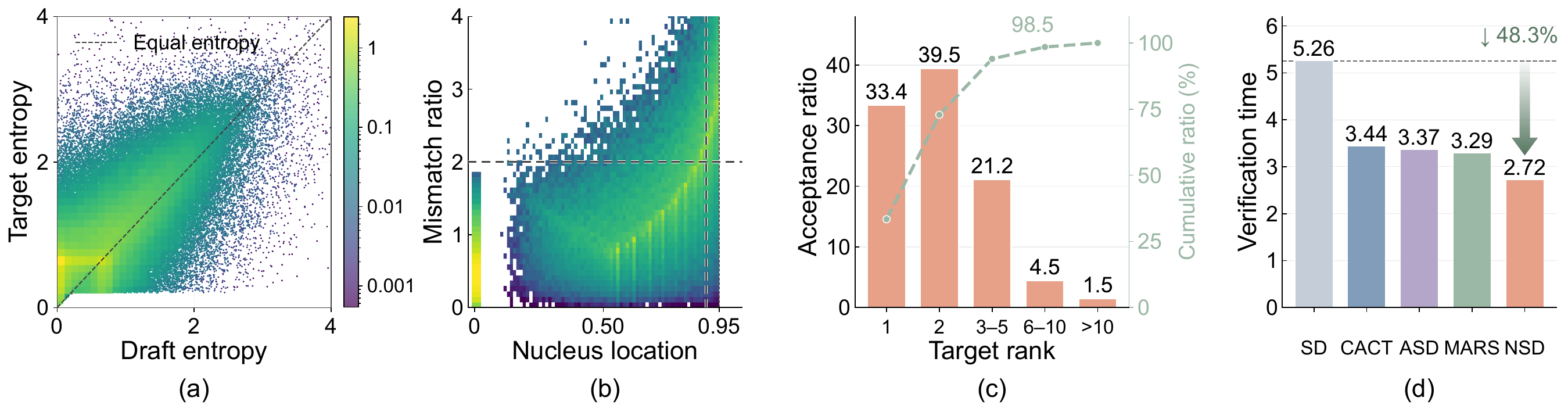}
\caption{Nucleus-based acceptance statistics and verification efficiency.
(a) Draft and target entropy.
(b) Nucleus location and log probability mismatch; location zero denotes the target rank-1 token, with its strip widened for visibility.
(c) Nucleus-accepted token shares and their cumulative ratio by target rank.
(d) Verification time in ms/output token for Nemotron models.}
\label{fig:rescue_target_draft_entropy}
\label{fig:qwen3_4b_rescue_triptych}
\end{figure}

We investigate the statistics of additional acceptances under NSD
and how this relaxation affects verification efficiency.
An additional acceptance is a verified draft proposal that fails
the standard stochastic acceptance but is retained by NSD because
it belongs to the target nucleus.
Figure~\ref{fig:qwen3_4b_rescue_triptych}(a--c) characterizes additional acceptances from Qwen3-4B with DSpark. Figure~\ref{fig:qwen3_4b_rescue_triptych}(d) compares verification timing on Nemotron-14B with self-speculation.

\paragraph{Draft distributions are often more concentrated.}
Figure~\ref{fig:qwen3_4b_rescue_triptych}(a) compares draft and target entropy at contexts where
additional acceptances occur.
The mean draft entropy is 0.865, compared with 1.101 for
the target, and target entropy exceeds draft entropy in 70.30\%
of these tokens.
The two entropies are positively correlated, with Pearson
correlation $r=0.668$.
These observations indicate that although draft and target
entropy tend to vary together, the draft distribution is often
more concentrated. This pattern complements the target-rank analysis by
distinguishing distributional concentration.
For an additionally accepted token $x$, failure of the
standard SD verification requires $q(x)>p(x)$, with rejection
probability $1-p(x)/q(x)$.
Thus, a proposal can be highly ranked by the target yet still
face a substantial rejection probability when the draft assigns
it more probability.

\paragraph{Most additional acceptances lie inside the nucleus core.}
Figure~\ref{fig:qwen3_4b_rescue_triptych}(b) further characterizes additional acceptances along two dimensions: nucleus location and target-draft probability mismatch.
Let $C(x)$ denote the cumulative target probability mass preceding
token $x$, and let
$Y(x)=\log(q(x)/p(x))$.
Within the additionally accepted tokens, we distinguish the nucleus core, $C(x)<0.9$, and the boundary region $C(x)\geq 0.9$.
Splitting the mismatch dimension at $Y=2$ yields four categories: core-lower mismatch, core-higher mismatch,
boundary-lower mismatch, and boundary-higher mismatch. Overall, 86.65\% of additional acceptances lie in the nucleus core,
showing that the relaxation does not operate primarily on tokens
near the nucleus boundary. The available statistics also show that 79.30\% satisfy both
$C(x)<0.9$ and the mismatch condition $Y(x)<2$. The remaining tokens are distributed among the
core-higher mismatch (\(7.36\%\)), boundary-lower mismatch (\(4.84\%\)),
and boundary-higher mismatch (\(8.50\%\)) regions. This concentration provides further evidence that nucleus-based
acceptance predominantly retains plausible tokens under the target distribution. And the draft-target mismatch is not significant.

\paragraph{Additional acceptances concentrate at high target ranks.}
Figure~\ref{fig:qwen3_4b_rescue_triptych}(c) shows that NSD frequently accepts proposals ranked highly
by the target model.
Among the additionally accepted tokens, 33.4\% are the target's rank-1 token, and 72.9\% fall within its top two.
The cumulative share reaches 94.1\% for the top five and 98.5\% for the top ten.
The results illustrate that high target rank does not ensure
acceptance under SD: its stochastic verification depends on
$\min\{1,p(x)/q(x)\}$ rather than target rank.
Thus, nucleus-based acceptance retains many highly ranked 
tokens that would otherwise be rejected to preserve the target distribution.

\paragraph{Substantial savings in verification time.}
In the verification timing comparison on Nemotron models,
NSD reduces verification time from 5.26 to 2.72 milliseconds per
output token, a 48.3\% reduction relative to SD. Compared with Cactus, ASD, and MARS, whose corresponding costs
are 3.44, 3.37, and 3.29 milliseconds, NSD reduces verification
time per output token by 20.9\%, 19.3\%, and 17.3\% respectively.
Verification calls per output token decrease from $0.1600$ under SD
 to $0.0835$ under NSD, a $47.81\%$ reduction, while target execution time
 per call changes by less than $1\%$ ($32.551$ versus $32.266$ milliseconds).
 These measurements support fewer verification calls per emitted token as
 the primary source of savings. NSD can continue accepting subsequent proposals
 using already computed target probabilities, spreading expensive verification cost over more tokens.

Taken together, these results clarify how nucleus-based verification
translates additional acceptance into end-to-end efficiency.
The additional proposals retained by NSD are not concentrated near the
boundary of the target nucleus: 86.65\% lie in the nucleus core, and 94.1\%
rank within the target model's top five tokens.
This reduces the frequency of expensive target verification. Verification calls per output token decrease by 47.81\%.
Therefore, the primary efficiency gain of NSD comes from amortizing
each target verification over more emitted tokens rather than making
an individual target-model invocation cheaper.
Competitive accuracy results suggest that NSD improves decoding efficiency by
relaxing verification mainly for highly plausible proposals under the target model.
\FloatBarrier
\section{Conclusion}

In this work, we introduce Nucleus Speculative Decoding (NSD), a simple but effective
relaxed verification method that supplements stochastic acceptance with
target-nucleus membership. It does not modify the proposal mechanism, residual
correction, or bonus sampling procedure, allowing it to be combined with diverse
speculative decoding paradigms. We provide a theoretical analysis of the distributional deviation. The deviation from the target distribution is exactly the
draft model's excess probability inside the target nucleus for a
single decoding step. This derivation reveals when NSD remains exact and provides
conditional sequence-level fidelity bounds relative to the target distribution. Experiments across Qwen and Nvidia-Nemotron model families and proposal mechanisms demonstrate
that our method improves throughput by retaining additional draft tokens while
maintaining competitive task performance. Results and analysis suggest that
plausibility-aware verification is a promising direction for designing
efficient speculative decoding algorithms. 
\label{maintextend}


\section*{Acknowledgements}
We thank the EIT and IDT High Performance Computing Center for providing computational resources for this project.

\begingroup
\raggedright
\bibliography{references,sections/new_related_work}

@misc{cheng2026dspark,
      title={DSpark: Confidence-Scheduled Speculative Decoding with Semi-Autoregressive Generation}, 
      author={Xin Cheng and Xingkai Yu and Chenze Shao and Jiashi Li and Yunfan Xiong and Yi Qian and Jiaqi Zhu and Shirong Ma and Xiaokang Zhang and Jiasheng Ye and Qinyu Chen and Chengqi Deng and Jiping Yu and Damai Dai and Zhengyan Zhang and Yixuan Wei and Yixuan Tan and Wenkai Yang and Runxin Xu and Yu Wu and Zhean Xu and Xuanyu Wang and Muyang Chen and Rui Tian and Xiao Bi and Zhewen Hao and Shaoyuan Chen and Huanqi Cao and Wentao Zhang and Anyi Xu and Huishuai Zhang and Dongyan Zhao and Wenfeng Liang},
      year={2026},
      eprint={2607.05147},
      archivePrefix={arXiv},
      primaryClass={cs.AI},
      url={https://arxiv.org/abs/2607.05147}, 
}

@misc{chen2026dflash,
      title={DFlash: Block Diffusion for Flash Speculative Decoding}, 
      author={Jian Chen and Yesheng Liang and Zhijian Liu},
      year={2026},
      eprint={2602.06036},
      archivePrefix={arXiv},
      primaryClass={cs.CL},
      url={https://arxiv.org/abs/2602.06036}, 
}

@misc{su2022welm,
      title={WeLM: A Well-Read Pre-trained Language Model for Chinese}, 
      author={Hui Su and Xiao Zhou and Houjin Yu and Xiaoyu Shen and Yuwen Chen and Zilin Zhu and Yang Yu and Jie Zhou},
      year={2022},
      eprint={2209.10372},
      archivePrefix={arXiv},
      primaryClass={cs.CL},
      url={https://arxiv.org/abs/2209.10372}, 
}

@misc{fu2026nemotron,
      title={Nemotron-Labs-Diffusion: A Tri-Mode Language Model Unifying Autoregressive, Diffusion, and Self-Speculation Decoding}, 
      author={Yonggan Fu and Lexington Whalen and Abhinav Garg and Chengyue Wu and Maksim Khadkevich and Nicolai Oswald and Enze Xie and Daniel Egert and Sharath Turuvekere Sreenivas and Shizhe Diao and Chenhan Yu and Ye Yu and Weijia Chen and Sajad Norouzi and Jingyu Liu and Shiyi Lan and Ligeng Zhu and Jin Wang and Jindong Jiang and Morteza Mardani and Mehran Maghoumi and Song Han and Ante Jukić and Nima Tajbakhsh and Jan Kautz and Pavlo Molchanov},
      year={2026},
      eprint={2607.05722},
      archivePrefix={arXiv},
      primaryClass={cs.CL},
      url={https://arxiv.org/abs/2607.05722}, 
}

@misc{bachmann2025,
      title={Judge Decoding: Faster Speculative Sampling Requires Going Beyond Model Alignment}, 
      author={Gregor Bachmann and Sotiris Anagnostidis and Albert Pumarola and Markos Georgopoulos and Artsiom Sanakoyeu and Yuming Du and Edgar Schönfeld and Ali Thabet and Jonas Kohler},
      year={2025},
      eprint={2501.19309},
      archivePrefix={arXiv},
      primaryClass={cs.LG},
      url={https://arxiv.org/abs/2501.19309}, 
}

@misc{holsman2025,
      title={Fuzzy Speculative Decoding for a Tunable Accuracy-Runtime Tradeoff}, 
      author={Maximilian Holsman and Yukun Huang and Bhuwan Dhingra},
      year={2025},
      eprint={2502.20704},
      archivePrefix={arXiv},
      primaryClass={cs.AI},
      url={https://arxiv.org/abs/2502.20704}, 
}

@misc{sun2026,
      title={Revisiting Judge Decoding from First Principles via Training-Free Distributional Divergence}, 
      author={Shengyin Sun and Yiming Li and Renxi Liu and Weizhe Lin and Hui-Ling Zhen and Xianzhi Yu and Mingxuan Yuan and Chen Ma},
      year={2026},
      eprint={2601.04766},
      archivePrefix={arXiv},
      primaryClass={cs.CL},
      url={https://arxiv.org/abs/2601.04766}, 
}

@misc{chen2021,
      title={Evaluating Large Language Models Trained on Code}, 
      author={Mark Chen and Jerry Tworek and Heewoo Jun and Qiming Yuan and Henrique Ponde de Oliveira Pinto and Jared Kaplan and Harri Edwards and Yuri Burda and Nicholas Joseph and Greg Brockman and Alex Ray and Raul Puri and Gretchen Krueger and Michael Petrov and Heidy Khlaaf and Girish Sastry and Pamela Mishkin and Brooke Chan and Scott Gray and Nick Ryder and Mikhail Pavlov and Alethea Power and Lukasz Kaiser and Mohammad Bavarian and Clemens Winter and Philippe Tillet and Felipe Petroski Such and Dave Cummings and Matthias Plappert and Fotios Chantzis and Elizabeth Barnes and Ariel Herbert-Voss and William Hebgen Guss and Alex Nichol and Alex Paino and Nikolas Tezak and Jie Tang and Igor Babuschkin and Suchir Balaji and Shantanu Jain and William Saunders and Christopher Hesse and Andrew N. Carr and Jan Leike and Josh Achiam and Vedant Misra and Evan Morikawa and Alec Radford and Matthew Knight and Miles Brundage and Mira Murati and Katie Mayer and Peter Welinder and Bob McGrew and Dario Amodei and Sam McCandlish and Ilya Sutskever and Wojciech Zaremba},
      year={2021},
      eprint={2107.03374},
      archivePrefix={arXiv},
      primaryClass={cs.LG},
      url={https://arxiv.org/abs/2107.03374}, 
}

@misc{austin2021,
      title={Program Synthesis with Large Language Models}, 
      author={Jacob Austin and Augustus Odena and Maxwell Nye and Maarten Bosma and Henryk Michalewski and David Dohan and Ellen Jiang and Carrie Cai and Michael Terry and Quoc Le and Charles Sutton},
      year={2021},
      eprint={2108.07732},
      archivePrefix={arXiv},
      primaryClass={cs.PL},
      url={https://arxiv.org/abs/2108.07732}, 
}

@misc{hendrycks2021,
      title={Measuring Mathematical Problem Solving With the MATH Dataset}, 
      author={Dan Hendrycks and Collin Burns and Saurav Kadavath and Akul Arora and Steven Basart and Eric Tang and Dawn Song and Jacob Steinhardt},
      year={2021},
      eprint={2103.03874},
      archivePrefix={arXiv},
      primaryClass={cs.LG},
      url={https://arxiv.org/abs/2103.03874}, 
}

@misc{cobbe2021gsm8k,
      title={Training Verifiers to Solve Math Word Problems}, 
      author={Karl Cobbe and Vineet Kosaraju and Mohammad Bavarian and Mark Chen and Heewoo Jun and Lukasz Kaiser and Matthias Plappert and Jerry Tworek and Jacob Hilton and Reiichiro Nakano and Christopher Hesse and John Schulman},
      year={2021},
      eprint={2110.14168},
      archivePrefix={arXiv},
      primaryClass={cs.LG},
      url={https://arxiv.org/abs/2110.14168}, 
}

@misc{yang2025qwen3,
      title={Qwen3 Technical Report}, 
      author={An Yang and Anfeng Li and Baosong Yang and Beichen Zhang and Binyuan Hui and Bo Zheng and Bowen Yu and Chang Gao and Chengen Huang and Chenxu Lv and Chujie Zheng and Dayiheng Liu and Fan Zhou and Fei Huang and Feng Hu and Hao Ge and Haoran Wei and Huan Lin and Jialong Tang and Jian Yang and Jianhong Tu and Jianwei Zhang and Jianxin Yang and Jiaxi Yang and Jing Zhou and Jingren Zhou and Junyang Lin and Kai Dang and Keqin Bao and Kexin Yang and Le Yu and Lianghao Deng and Mei Li and Mingfeng Xue and Mingze Li and Pei Zhang and Peng Wang and Qin Zhu and Rui Men and Ruize Gao and Shixuan Liu and Shuang Luo and Tianhao Li and Tianyi Tang and Wenbiao Yin and Xingzhang Ren and Xinyu Wang and Xinyu Zhang and Xuancheng Ren and Yang Fan and Yang Su and Yichang Zhang and Yinger Zhang and Yu Wan and Yuqiong Liu and Zekun Wang and Zeyu Cui and Zhenru Zhang and Zhipeng Zhou and Zihan Qiu},
      year={2025},
      eprint={2505.09388},
      archivePrefix={arXiv},
      primaryClass={cs.CL},
      url={https://arxiv.org/abs/2505.09388}, 
}

@misc{li2024eagle,
      title={EAGLE: Speculative Sampling Requires Rethinking Feature Uncertainty}, 
      author={Yuhui Li and Fangyun Wei and Chao Zhang and Hongyang Zhang},
      year={2025},
      eprint={2401.15077},
      archivePrefix={arXiv},
      primaryClass={cs.LG},
      url={https://arxiv.org/abs/2401.15077}, 
}

@misc{li2024eagle2,
      title={EAGLE-2: Faster Inference of Language Models with Dynamic Draft Trees}, 
      author={Yuhui Li and Fangyun Wei and Chao Zhang and Hongyang Zhang},
      year={2024},
      eprint={2406.16858},
      archivePrefix={arXiv},
      primaryClass={cs.CL},
      url={https://arxiv.org/abs/2406.16858}, 
}

@misc{gloeckle2024mtp,
      title={Better \& Faster Large Language Models via Multi-token Prediction}, 
      author={Fabian Gloeckle and Badr Youbi Idrissi and Baptiste Rozière and David Lopez-Paz and Gabriel Synnaeve},
      year={2024},
      eprint={2404.19737},
      archivePrefix={arXiv},
      primaryClass={cs.CL},
      url={https://arxiv.org/abs/2404.19737}, 
}

@misc{deepseekai2026deepseekv4,
      title={DeepSeek-V4: Towards Highly Efficient Million-Token Context Intelligence}, 
      author={DeepSeek-AI and Anyi Xu and Bangcai Lin and Bing Xue and Bingxuan Wang and Bingzheng Xu and Bochao Wu and Bowei Zhang and Chaofan Lin and Chen Dong and Chenchen Ling and Chengda Lu and Chenggang Zhao and Chengqi Deng and Chengyu Hou and Chenhao Xu and Chenze Shao and Chong Ruan and Conner Sun and Damai Dai and Daya Guo and Dejian Yang and Deli Chen and Donghao Li and Dongjie Ji and Erhang Li and Fang Wei and Fangyun Lin and Fangzhou Yuan and Feiyu Xia and Fucong Dai and Guangbo Hao and Guanting Chen and Guoai Cao and Guolai Meng and Guowei Li and Han Yu and Han Zhang and Hanwei Xu and Hao Li and Haofen Liang and Haoling Zhang and Haoming Luo and Haoran Wei and Haotian Yuan and Haowei Zhang and Haowen Luo and Haoyu Chen and Haozhe Ji and Hengqing Zhang and Honghui Ding and Hongxuan Tang and Huanqi Cao and Huazuo Gao and Hui Qu and Hui Zeng and J Yang and JQ Zhu and Jia Luo and Jia Song and Jia Yu and Jialiang Huang and Jialu Cai and Jian Liang and Jiangting Zhou and Jiasheng Ye and Jiashi Li and Jiaxin Xu and Jiewen Hu and Jieyu Yang and Jin Chen and Jin Yan and Jingchang Chen and Jingli Zhou and Jingting Xiang and Jingyang Yuan and Jingyuan Cheng and Jingzi Zhou and Jinhua Zhu and Jiping Yu and Joseph Sun and Jun Ran and Junguang Jiang and Junjie Qiu and Junlong Li and Junmin Zheng and Junxiao Song and Kai Dong and Kaige Gao and Kang Guan and Kexing Zhou and Kezhao Huang and Kuai Yu and Lean Wang and Lecong Zhang and Lei Wang and Leyi Xia and Li Zhang and Liang Zhao and Lihua Guo and Lingxiao Luo and Linwang Ma and Linyan Zhu and Litong Wang and Liyu Cai and Liyue Zhang and Longhao Chen and MS Di and MY Xu and Max Mei and Miaojun Wang and Mingchuan Zhang and Minghua Zhang and Minghui Tang and Mingming Li and Mingxu Zhou and Minmin Han and Ning Wang and Panpan Huang and Panpan Wang and Peixin Cong and Peiyi Wang and Peng Zhang and Qiancheng Wang and Qihao Zhu and Qingyang Li and Qinyu Chen and Qiushi Du and Qiwei Jiang and Rui Tian and Ruifan Xu and Ruijie Lu and Ruiling Xu and Ruiqi Ge and Ruisong Zhang and Ruizhe Pan and Runji Wang and Runqian Chen and Runqiu Yin and Runxin Xu and Ruomeng Shen and Ruoyu Zhang and Ruyi Chen and SH Liu and Shanghao Lu and Shangmian Sun and Shangyan Zhou and Shanhuang Chen and Shaofei Cai and Shaoheng Nie and Shaoqing Wu and Shaoyuan Chen and Shengding Hu and Shengyu Liu and Shiqiang Hu and Shirong Ma and Shiyu Wang and Shuiping Yu and Shunfeng Zhou and Shuting Pan and Shuying Yu and Songyang Zhou and Tao Ni and Tao Yun and Tian Jin and Tian Pei and Tian Ye and Tianle Lin and Tianran Ji and Tianyi Cui and Tianyuan Yue and Tingting Yu and Tun Wang and W Zhang and WL Xiao and Wangding Zeng and Wei An and Weilin Zhao and Wen Liu and Wenfeng Liang and Wenjie Pang and Wenjing Luo and Wenjing Yao and Wenjun Gao and Wenkai Yang and Wenlve Huang and Wenqing Hou and Wentao Zhang and Wenting Ma and Xi Gao and Xiang He and Xiangwen Wang and Xianzu Wang and Xiao Bi and Xiaodong Liu and Xiaohan Wang and Xiaokang Chen and Xiaokang Zhang and Xiaotao Nie and Xiaowen Sun and Xiaoxiang Wang and Xin Cheng and Xin Liu and Xin Xie and Xingchao Liu and Xingchen Liu and Xingkai Yu and Xingyou Li and Xinyu Yang and Xinyu Zhang and Xu Chen and Xuanyu Wang and Xuecheng Su and Xueyin Chen and Xuheng Lin and Xuwei Fu and YC Yan and YQ Wang and YW Ma and Yanfeng Luo and Yang Zhang and Yanhong Xu and Yanru Ma and Yanwen Huang and Yao Li and Yao Li and Yao Xu and Yao Zhao and Yaofeng Sun and Yaohui Wang and Yi Qian and Yi Shao and Yi Yu and Yichao Zhang and Yifan Ding and Yifan Shi and Yijia Wu and Yiliang Xiong and Yiling Ma and Ying He and Ying Tang and Ying Zhou and Yingjia Luo and Yinmin Zhong and Yishi Piao and Yisong Wang and Yixiang Zhang and Yixiao Chen and Yixuan Tan and Yixuan Wei and Yiyang Ma and Yiyuan Liu and Yonglun Yang and Yongqiang Guo and Yongtong Wu and Yu Wu and YuKun Li and Yuan Cheng and Yuan Ou and Yuanfan Xu and Yuanhao Li and Yuduan Wang and Yuehan Yang and Yuer Xu and Yuhan Wu and Yuhao Meng and Yuheng Zou and Yukun Zha and Yunfan Xiong and Yupeng Chen and Yuping Lin and Yuqian Cao and Yuqian Wang and Yushun Zhang and Yuting Yan and Yutong Lin and Yuxian Gu and Yuxiang Luo and Yuxiang You and Yuxuan Liu and Yuxuan Zhou and Yuyang Zhou and Yuzhen Huang and ZF Wu and Zehao Wang and Zehua Zhao and Zehui Ren and Zekai Zhang and Zhangli Sha and Zhe Fu and Zhe Ju and Zhean Xu and Zhenda Xie and Zhengyan Zhang and Zheren Gao and Zhewen Hao and Zhibin Gou and Zhicheng Ma and Zhigang Yan and Zhihong Shao and Zhixian Huang and Zhixuan Chen and Zhiyu Wu and Zhizhou Ren and Zhongyu Wu and Zhuoshu Li and Zhuping Zhang and Zian Xu and Zihao Wang and Zihua Qu and Zihui Gu and Zijia Zhu and Zilin Li and Zipeng Zhang and Ziwei Xie and Ziyi Gao and Ziyi Wan and Zizheng Pan and Zongqing Yao},
      year={2026},
      eprint={2606.19348},
      archivePrefix={arXiv},
      primaryClass={cs.CL},
      url={https://arxiv.org/abs/2606.19348}, 
}

@article{guo2025r1,
   title={DeepSeek-R1 incentivizes reasoning in LLMs through reinforcement learning},
   volume={645},
   ISSN={1476-4687},
   url={http://dx.doi.org/10.1038/s41586-025-09422-z},
   DOI={10.1038/s41586-025-09422-z},
   number={8081},
   journal={Nature},
   publisher={Springer Science and Business Media LLC},
   author={Guo, Daya and Yang, Dejian and Zhang, Haowei and Song, Junxiao and Wang, Peiyi and Zhu, Qihao and Xu, Runxin and Zhang, Ruoyu and Ma, Shirong and Bi, Xiao and Zhang, Xiaokang and Yu, Xingkai and Wu, Yu and Wu, Z. F. and Gou, Zhibin and Shao, Zhihong and Li, Zhuoshu and Gao, Ziyi and Liu, Aixin and Xue, Bing and Wang, Bingxuan and Wu, Bochao and Feng, Bei and Lu, Chengda and Zhao, Chenggang and Deng, Chengqi and Ruan, Chong and Dai, Damai and Chen, Deli and Ji, Dongjie and Li, Erhang and Lin, Fangyun and Dai, Fucong and Luo, Fuli and Hao, Guangbo and Chen, Guanting and Li, Guowei and Zhang, H. and Xu, Hanwei and Ding, Honghui and Gao, Huazuo and Qu, Hui and Li, Hui and Guo, Jianzhong and Li, Jiashi and Chen, Jingchang and Yuan, Jingyang and Tu, Jinhao and Qiu, Junjie and Li, Junlong and Cai, J. L. and Ni, Jiaqi and Liang, Jian and Chen, Jin and Dong, Kai and Hu, Kai and You, Kaichao and Gao, Kaige and Guan, Kang and Huang, Kexin and Yu, Kuai and Wang, Lean and Zhang, Lecong and Zhao, Liang and Wang, Litong and Zhang, Liyue and Xu, Lei and Xia, Leyi and Zhang, Mingchuan and Zhang, Minghua and Tang, Minghui and Zhou, Mingxu and Li, Meng and Wang, Miaojun and Li, Mingming and Tian, Ning and Huang, Panpan and Zhang, Peng and Wang, Qiancheng and Chen, Qinyu and Du, Qiushi and Ge, Ruiqi and Zhang, Ruisong and Pan, Ruizhe and Wang, Runji and Chen, R. J. and Jin, R. L. and Chen, Ruyi and Lu, Shanghao and Zhou, Shangyan and Chen, Shanhuang and Ye, Shengfeng and Wang, Shiyu and Yu, Shuiping and Zhou, Shunfeng and Pan, Shuting and Li, S. S. and Zhou, Shuang and Wu, Shaoqing and Yun, Tao and Pei, Tian and Sun, Tianyu and Wang, T. and Zeng, Wangding and Liu, Wen and Liang, Wenfeng and Gao, Wenjun and Yu, Wenqin and Zhang, Wentao and Xiao, W. L. and An, Wei and Liu, Xiaodong and Wang, Xiaohan and Chen, Xiaokang and Nie, Xiaotao and Cheng, Xin and Liu, Xin and Xie, Xin and Liu, Xingchao and Yang, Xinyu and Li, Xinyuan and Su, Xuecheng and Lin, Xuheng and Li, X. Q. and Jin, Xiangyue and Shen, Xiaojin and Chen, Xiaosha and Sun, Xiaowen and Wang, Xiaoxiang and Song, Xinnan and Zhou, Xinyi and Wang, Xianzu and Shan, Xinxia and Li, Y. K. and Wang, Y. Q. and Wei, Y. X. and Zhang, Yang and Xu, Yanhong and Li, Yao and Zhao, Yao and Sun, Yaofeng and Wang, Yaohui and Yu, Yi and Zhang, Yichao and Shi, Yifan and Xiong, Yiliang and He, Ying and Piao, Yishi and Wang, Yisong and Tan, Yixuan and Ma, Yiyang and Liu, Yiyuan and Guo, Yongqiang and Ou, Yuan and Wang, Yuduan and Gong, Yue and Zou, Yuheng and He, Yujia and Xiong, Yunfan and Luo, Yuxiang and You, Yuxiang and Liu, Yuxuan and Zhou, Yuyang and Zhu, Y. X. and Huang, Yanping and Li, Yaohui and Zheng, Yi and Zhu, Yuchen and Ma, Yunxian and Tang, Ying and Zha, Yukun and Yan, Yuting and Ren, Z. Z. and Ren, Zehui and Sha, Zhangli and Fu, Zhe and Xu, Zhean and Xie, Zhenda and Zhang, Zhengyan and Hao, Zhewen and Ma, Zhicheng and Yan, Zhigang and Wu, Zhiyu and Gu, Zihui and Zhu, Zijia and Liu, Zijun and Li, Zilin and Xie, Ziwei and Song, Ziyang and Pan, Zizheng and Huang, Zhen and Xu, Zhipeng and Zhang, Zhongyu and Zhang, Zhen},
   year={2025},
   pages={633–638} }

@inproceedings{bachmann2025judge,
  title={Judge decoding: Faster speculative sampling requires going beyond model alignment},
  author={Bachmann, Gregor and Anagnostidis, Sotiris and Pumarola, Albert and Georgopoulos, Markos and Sanakoyeu, Artsiom and Du, Yuming and Sch{\"o}nfeld, Edgar and Thabet, Ali and Kohler, Jonas},
  booktitle={International Conference on Learning Representations},
  volume={2025},
  pages={40894--40914},
  year={2025}
}

@misc{stern2018blockwise,
      title={Blockwise Parallel Decoding for Deep Autoregressive Models}, 
      author={Mitchell Stern and Noam Shazeer and Jakob Uszkoreit},
      year={2018},
      eprint={1811.03115},
      archivePrefix={arXiv},
      primaryClass={cs.LG},
      url={https://arxiv.org/abs/1811.03115}, 
}

@misc{miao2023specinfer,
      title={SpecInfer: Accelerating Generative Large Language Model Serving with Tree-based Speculative Inference and Verification}, 
      author={Xupeng Miao and Gabriele Oliaro and Zhihao Zhang and Xinhao Cheng and Zeyu Wang and Zhengxin Zhang and Rae Ying Yee Wong and Alan Zhu and Lijie Yang and Xiaoxiang Shi and Chunan Shi and Zhuoming Chen and Daiyaan Arfeen and Reyna Abhyankar and Zhihao Jia},
      year={2024},
      eprint={2305.09781},
      archivePrefix={arXiv},
      primaryClass={cs.CL},
      doi={https://doi.org/10.1145/3620666.3651335},
      url={https://arxiv.org/abs/2305.09781}, 
}

@misc{zhang2024draftverify,
      title={Draft \& Verify: Lossless Large Language Model Acceleration via Self-Speculative Decoding}, 
      author={Jun Zhang and Jue Wang and Huan Li and Lidan Shou and Ke Chen and Gang Chen and Sharad Mehrotra},
      year={2024},
      eprint={2309.08168},
      archivePrefix={arXiv},
      primaryClass={cs.CL},
      doi={https://doi.org/10.18653/v1/2024.acl-long.607},
      url={https://arxiv.org/abs/2309.08168}, 
}

@misc{qwenteam2026qwen35omnitechnicalreport,
      title={Qwen3.5-Omni Technical Report},
      author={{Qwen Team}},
      year={2026},
      eprint={2604.15804},
      archivePrefix={arXiv},
      primaryClass={cs.CL},
      url={https://arxiv.org/abs/2604.15804},
}

@misc{feng2026asd,
      title={Approximate Speculative Decoding}, 
      author={Yuannuo Feng and Zegang Peng and Yuxin Xie and Yubing Ye and Yizhe Chen and Wenshuai Yao and Wenyong Zhou and Wang Kang},
      year={2026},
      eprint={2608.03447},
      archivePrefix={arXiv},
      primaryClass={cs.LG},
      url={https://arxiv.org/abs/2608.03447}, 
}

@misc{hao2026,
      title={Cactus: Accelerating Auto-Regressive Decoding with Constrained Acceptance Speculative Sampling}, 
      author={Yongchang Hao and Lili Mou},
      year={2026},
      eprint={2604.04987},
      archivePrefix={arXiv},
      primaryClass={cs.LG},
      url={https://arxiv.org/abs/2604.04987}, 
}

@misc{urban2026adaptive,
      title={Margins, Not Windows: Training-Free Per-Step Lossy Speculative Decoding}, 
      author={Oszkár Urbán and Young D. Kwon and Stylianos I. Venieris and Cecilia Mascolo},
      year={2026},
      eprint={2609.02897},
      archivePrefix={arXiv},
      primaryClass={cs.CL},
      url={https://arxiv.org/abs/2609.02897}, 
}

@misc{song2026,
      title={MARS: Unleashing the Power of Speculative Decoding via Margin-Aware Verification}, 
      author={Jingwei Song and Xinyu Wang and Hanbin Wang and Xiaoxuan Lei and Bill Shi and Shixin Han and Eric Yang and Xiao-Wen Chang and Lynn Ai},
      year={2026},
      eprint={2601.15498},
      archivePrefix={arXiv},
      primaryClass={cs.LG},
      url={https://arxiv.org/abs/2601.15498}, 
}

@article{d3b61654b1204385b1a2e31f324d0229,
title = "Maximal coupling",
author = "Sheldon Goldstein",
year = "1979",
month = jan,
doi = "10.1007/BF00533259",
language = "English (US)",
volume = "46",
pages = "193--204",
journal = "Zeitschrift f{\"u}r Wahrscheinlichkeitstheorie und Verwandte Gebiete",
issn = "0044-3719",
publisher = "Springer Verlag",
number = "2",

}

@misc{gibbs2002choosingboundingprobabilitymetrics,
      title={On choosing and bounding probability metrics}, 
      author={Alison L. Gibbs and Francis Edward Su},
      year={2002},
      eprint={math/0209021},
      archivePrefix={arXiv},
      primaryClass={math.PR},
      url={https://arxiv.org/abs/math/0209021}, 
}

@misc{liu2026speculativedecodingperformanceillusion,
      title={Speculative Decoding: Performance or Illusion?}, 
      author={Xiaoxuan Liu and Jiaxiang Yu and Jongseok Park and Ion Stoica and Alvin Cheung},
      year={2026},
      eprint={2601.11580},
      archivePrefix={arXiv},
      primaryClass={cs.CL},
      url={https://arxiv.org/abs/2601.11580}, 
}

@misc{leviathan2023,
      title={Fast Inference from Transformers via Speculative Decoding}, 
      author={Yaniv Leviathan and Matan Kalman and Yossi Matias},
      year={2023},
      eprint={2211.17192},
      archivePrefix={arXiv},
      primaryClass={cs.LG},
      url={https://arxiv.org/abs/2211.17192}, 
}

@misc{chen2023accelerating,
      title={Accelerating Large Language Model Decoding with Speculative Sampling}, 
      author={Charlie Chen and Sebastian Borgeaud and Geoffrey Irving and Jean-Baptiste Lespiau and Laurent Sifre and John Jumper},
      year={2023},
      eprint={2302.01318},
      archivePrefix={arXiv},
      primaryClass={cs.CL},
      url={https://arxiv.org/abs/2302.01318}, 
}

@misc{cai2024,
      title={Medusa: Simple LLM Inference Acceleration Framework with Multiple Decoding Heads}, 
      author={Tianle Cai and Yuhong Li and Zhengyang Geng and Hongwu Peng and Jason D. Lee and Deming Chen and Tri Dao},
      year={2024},
      eprint={2401.10774},
      archivePrefix={arXiv},
      primaryClass={cs.LG},
      url={https://arxiv.org/abs/2401.10774}, 
}

@misc{holtzman2020,
      title={The Curious Case of Neural Text Degeneration}, 
      author={Ari Holtzman and Jan Buys and Li Du and Maxwell Forbes and Yejin Choi},
      year={2020},
      eprint={1904.09751},
      archivePrefix={arXiv},
      primaryClass={cs.CL},
      url={https://arxiv.org/abs/1904.09751}, 
}

@misc{li2025eagle3,
      title={EAGLE-3: Scaling up Inference Acceleration of Large Language Models via Training-Time Test}, 
      author={Yuhui Li and Fangyun Wei and Chao Zhang and Hongyang Zhang},
      year={2025},
      eprint={2503.01840},
      archivePrefix={arXiv},
      primaryClass={cs.CL},
      url={https://arxiv.org/abs/2503.01840}, 
}

@misc{deepseekv3,
      title={DeepSeek-V3 Technical Report}, 
      author={DeepSeek-AI and Aixin Liu and Bei Feng and Bing Xue and Bingxuan Wang and Bochao Wu and Chengda Lu and Chenggang Zhao and Chengqi Deng and Chenyu Zhang and Chong Ruan and Damai Dai and Daya Guo and Dejian Yang and Deli Chen and Dongjie Ji and Erhang Li and Fangyun Lin and Fucong Dai and Fuli Luo and Guangbo Hao and Guanting Chen and Guowei Li and H. Zhang and Han Bao and Hanwei Xu and Haocheng Wang and Haowei Zhang and Honghui Ding and Huajian Xin and Huazuo Gao and Hui Li and Hui Qu and J. L. Cai and Jian Liang and Jianzhong Guo and Jiaqi Ni and Jiashi Li and Jiawei Wang and Jin Chen and Jingchang Chen and Jingyang Yuan and Junjie Qiu and Junlong Li and Junxiao Song and Kai Dong and Kai Hu and Kaige Gao and Kang Guan and Kexin Huang and Kuai Yu and Lean Wang and Lecong Zhang and Lei Xu and Leyi Xia and Liang Zhao and Litong Wang and Liyue Zhang and Meng Li and Miaojun Wang and Mingchuan Zhang and Minghua Zhang and Minghui Tang and Mingming Li and Ning Tian and Panpan Huang and Peiyi Wang and Peng Zhang and Qiancheng Wang and Qihao Zhu and Qinyu Chen and Qiushi Du and R. J. Chen and R. L. Jin and Ruiqi Ge and Ruisong Zhang and Ruizhe Pan and Runji Wang and Runxin Xu and Ruoyu Zhang and Ruyi Chen and S. S. Li and Shanghao Lu and Shangyan Zhou and Shanhuang Chen and Shaoqing Wu and Shengfeng Ye and Shengfeng Ye and Shirong Ma and Shiyu Wang and Shuang Zhou and Shuiping Yu and Shunfeng Zhou and Shuting Pan and T. Wang and Tao Yun and Tian Pei and Tianyu Sun and W. L. Xiao and Wangding Zeng and Wanjia Zhao and Wei An and Wen Liu and Wenfeng Liang and Wenjun Gao and Wenqin Yu and Wentao Zhang and X. Q. Li and Xiangyue Jin and Xianzu Wang and Xiao Bi and Xiaodong Liu and Xiaohan Wang and Xiaojin Shen and Xiaokang Chen and Xiaokang Zhang and Xiaosha Chen and Xiaotao Nie and Xiaowen Sun and Xiaoxiang Wang and Xin Cheng and Xin Liu and Xin Xie and Xingchao Liu and Xingkai Yu and Xinnan Song and Xinxia Shan and Xinyi Zhou and Xinyu Yang and Xinyuan Li and Xuecheng Su and Xuheng Lin and Y. K. Li and Y. Q. Wang and Y. X. Wei and Y. X. Zhu and Yang Zhang and Yanhong Xu and Yanhong Xu and Yanping Huang and Yao Li and Yao Zhao and Yaofeng Sun and Yaohui Li and Yaohui Wang and Yi Yu and Yi Zheng and Yichao Zhang and Yifan Shi and Yiliang Xiong and Ying He and Ying Tang and Yishi Piao and Yisong Wang and Yixuan Tan and Yiyang Ma and Yiyuan Liu and Yongqiang Guo and Yu Wu and Yuan Ou and Yuchen Zhu and Yuduan Wang and Yue Gong and Yuheng Zou and Yujia He and Yukun Zha and Yunfan Xiong and Yunxian Ma and Yuting Yan and Yuxiang Luo and Yuxiang You and Yuxuan Liu and Yuyang Zhou and Z. F. Wu and Z. Z. Ren and Zehui Ren and Zhangli Sha and Zhe Fu and Zhean Xu and Zhen Huang and Zhen Zhang and Zhenda Xie and Zhengyan Zhang and Zhewen Hao and Zhibin Gou and Zhicheng Ma and Zhigang Yan and Zhihong Shao and Zhipeng Xu and Zhiyu Wu and Zhongyu Zhang and Zhuoshu Li and Zihui Gu and Zijia Zhu and Zijun Liu and Zilin Li and Ziwei Xie and Ziyang Song and Ziyi Gao and Zizheng Pan},
      year={2025},
      eprint={2412.19437},
      archivePrefix={arXiv},
      primaryClass={cs.CL},
      url={https://arxiv.org/abs/2412.19437}, 
}
\bibliographystyle{plainnat}
\endgroup
\clearpage
\appendix
\section{Theoretical Analysis and Proofs}
\label{app:proofs_candidate}
\label{sec:nsd_fidelity}

This appendix establishes the one-step distribution induced by nucleus-based
acceptance and then gives a conditional bound for sequences of emitted tokens.
The local identity is exact under the sampling assumptions below.

\subsection{Assumptions and Notation}
\label{app:nsd_assumptions}

Fix a finite vocabulary $\Vocab$ and an emitted prefix. Let $p$ be the
target sampling distribution and $q$ the actual proposal distribution after
any temperature or sampling transformations. The proposal is sampled as
$X\sim q$; using a different distribution in the acceptance ratio does not
satisfy this assumption. Let $S$ be a fixed set at this prefix, such as the
target nucleus in Equation~\eqref{eq:nsd_nucleus}, with ties resolved by a
fixed ordering. The proof holds for any such set, including the empty and
full vocabulary sets.

For $q(x)>0$, NSD accepts $x$ with probability
\begin{equation}
 a_S(x)=
 \begin{cases}
  1, & x\in S,\\
  \min\{1,p(x)/q(x)\}, & x\notin S.
 \end{cases}
 \label{eq:appendix_acceptance}
\end{equation}
Define
\begin{equation}
 \begin{aligned}
 d_+(x)&=[q(x)-p(x)]_+, & d_-(x)&=[p(x)-q(x)]_+,\\
 R&=\TV(q,p), & E_S&=\sum_{x\in S}d_+(x).
 \end{aligned}
 \label{eq:appendix_excess}
\end{equation}
Both $d_+$ and $d_-$ sum to $R$, and their supports are disjoint.
When $R>0$, every remaining rejection is followed by a draw from the
unchanged SD residual $\rho(x)=d_-(x)/R$, independent of the rejected
token's identity. The analysis concerns the resulting emitted token,
including this recovery draw. It does not condition on acceptance alone.

\subsection{Single-Step Output Distribution and Exact Error}

\begin{theorem}[Exact single-step NSD error and acceptance gain]
\label{thm:single_step_exact}
Under the assumptions above, for $R>0$ the emitted law is
\begin{equation}
 \widetilde p(x)=\min\{p(x),q(x)\}+\mathbf{1}_S(x)d_+(x)
                 +(R-E_S)\rho(x).
 \label{eq:output_candidate}
\end{equation}
It is normalized and satisfies
\begin{equation}
 \TV(\widetilde p,p)=E_S
 =\bar a_{\mathrm{NSD}}-\bar a_{\mathrm{SD}}\le R,
 \qquad
 \bar a_{\mathrm{NSD}}=1-R+E_S.
 \label{eq:appendix_local_identity}
\end{equation}
Here $\bar a$ is the probability of accepting the proposal. If $R=0$,
then $\widetilde p=p=q$ and the same identity holds with $E_S=0$.
\end{theorem}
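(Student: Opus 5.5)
The plan is to compute the emitted law directly by conditioning on the proposal $X\sim q$ and splitting into the accept and reject branches. For a fixed token $x$, the probability that $x$ is proposed and accepted is $q(x)a_S(x)$. If $x\notin S$, this equals $\min\{q(x),p(x)\}$. If $x\in S$, it equals $q(x)=\min\{p(x),q(x)\}+d_+(x)$. Hence in both cases
\[
q(x)a_S(x)=\min\{p(x),q(x)\}+\mathbf{1}_S(x)d_+(x).
\]
Summing over $x$ and using $\sum_x\min\{p,q\}=1-R$ gives
\[
\bar a_{\mathrm{NSD}}=1-R+E_S.
\]
Specializing to $S=\emptyset$ gives $\bar a_{\mathrm{SD}}=1-R$, so the acceptance gain is $E_S$.

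The total rejection probability is therefore $R-E_S$. Because the recovery draw comes from $\rho$ independently of which token was rejected, the reject branch contributes $(R-E_S)\rho(x)$. Adding the two branches gives Equation~\eqref{eq:output_candidate}. Normalization then follows from $(1-R+E_S)+(R-E_S)\sum_x\rho(x)=1$. Here $\sum_x\rho(x)=1$ holds because $\sum_x d_-(x)=R$, which comes from $\sum_x(q-p)=0$.

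For the total variation identity, I would write $p=\min\{p,q\}+d_-=\min\{p,q\}+R\rho$. This gives
\[
\widetilde p(x)-p(x)=\mathbf{1}_S(x)d_+(x)-E_S\rho(x).
\]
The two terms are supported on disjoint sets: the first on $\{q>p\}$ and the second on $\{p>q\}$. So the absolute values add, and
\[
\sum_x|\widetilde p(x)-p(x)|=E_S+E_S\sum_x\rho(x)=2E_S,
\]
so $\TV(\widetilde p,p)=E_S$. The bound $E_S\le R$ is immediate because $E_S$ is a partial sum of the nonnegative $d_+$, whose total is $R$.

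For the degenerate case $R=0$, we have $p=q$, so $d_+\equiv0$ and $E_S=0$. Every proposal is then accepted with probability one, whether or not it lies in $S$, and no residual is ever needed. Hence $\widetilde p=q=p$.

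The main point needing care, rather than a genuine obstacle, is justifying that the reject branch really produces the factor $(R-E_S)\rho(x)$. This requires that the recovery distribution is the unchanged SD residual $\rho$ and is drawn independently of the rejected token, and that no rejection ever occurs from $S$. Both facts follow from the stated assumptions, so the rest of the argument is bookkeeping.
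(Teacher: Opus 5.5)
Your proposal is correct and follows essentially the same route as the paper: the accept-mass identity $q(x)a_S(x)=\min\{p(x),q(x)\}+\mathbf{1}_S(x)d_+(x)$, summing it to get rejection probability $R-E_S$ followed by the $\rho$ recovery draw, and then the disjoint-support decomposition $\widetilde p-p=\mathbf{1}_S d_+-E_S\rho$ to obtain $E_S$. The only detail the paper adds is that the mass identity also holds when $q(x)=0$ (both sides vanish), so the ratio $p(x)/q(x)$ never needs to be evaluated.
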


\begin{proof}
The mass emitted by accepting a proposal is
\[
 q(x)a_S(x)=\min\{p(x),q(x)\}+\mathbf{1}_S(x)d_+(x).
\]
This mass identity also holds for $q(x)=0$, without evaluating $p(x)/q(x)$.
Summing over the vocabulary gives $1-R+E_S$, leaving rejection probability
$R-E_S$. Multiplying this probability by $\rho$ yields
Equation~\eqref{eq:output_candidate}. All terms are nonnegative because
$0\le E_S\le R$, and their sum is $(1-R)+E_S+(R-E_S)=1$.

Using $p(x)=\min\{p(x),q(x)\}+d_-(x)$, the signed deviation is
\[
 \widetilde p(x)-p(x)
 =\mathbf{1}_S(x)d_+(x)-\frac{E_S}{R}d_-(x).
\]
The positive and negative terms have disjoint support, so
\[
 \TV(\widetilde p,p)
 =\frac12\left(E_S+\frac{E_S}{R}\sum_xd_-(x)\right)=E_S.
\]
Standard SD accepts with probability $1-R$, giving the claimed acceptance
gain. If $R=0$, then $p=q$ and all proposals are accepted, so a residual
distribution is unnecessary. This also covers partial-support distributions.
\end{proof}

The result proves the local identity in Equation~\eqref{eq:nsd_local_identity}:
each unit of added acceptance mass introduces an equal amount of one-step
TV deviation. This concerns proposal acceptance, not the number of tokens
committed per verification or the end-to-end throughput gain.

\subsection{Exactness and Threshold Monotonicity}

\begin{corollary}[Exact single-step sampling]
\label{cor:single_step_lossless}
Under the same assumptions, $\widetilde p=p$ if and only if $E_S=0$,
equivalently $q(x)\le p(x)$ for every $x\in S$.
\end{corollary}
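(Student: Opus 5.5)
The corollary follows from Theorem~\ref{thm:single_step_exact}, which gives $\TV(\widetilde p,p)=E_S$ whenever $R>0$ and treats $R=0$ separately. The plan is to use that $\TV$ is a metric on distributions over the finite vocabulary $\Vocab$, so $\widetilde p=p$ exactly when $\TV(\widetilde p,p)=0$. The argument then reduces to two short steps: translating $\widetilde p=p$ into $E_S=0$ through the theorem, and translating $E_S=0$ into a pointwise condition on $S$.

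First I would dispose of the case $R=0$. Here the theorem gives $\widetilde p=p=q$, so the left-hand statement holds. Since $d_+\equiv 0$, we have $E_S=0$, and $q(x)\le p(x)$ holds for every $x$, in particular on $S$. All three conditions are therefore true simultaneously. For $R>0$, the identity $\TV(\widetilde p,p)=E_S$ from Equation~\eqref{eq:appendix_local_identity} gives $\widetilde p=p \iff \TV(\widetilde p,p)=0 \iff E_S=0$ directly.

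For the second equivalence, $E_S=\sum_{x\in S}[q(x)-p(x)]_+$ is a finite sum of nonnegative terms. It vanishes if and only if each term vanishes, that is, iff $q(x)\le p(x)$ for all $x\in S$. This step is independent of whether $R$ is zero.

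I expect no real obstacle. The only point needing care is the degenerate case $R=0$, where $\rho$ is undefined and Equation~\eqref{eq:output_candidate} does not apply, so that case must be handled by the theorem's separate clause rather than the TV identity. I might also note the alternative of reading the equivalence off the signed deviation $\mathbf{1}_S d_+ - (E_S/R)d_-$, which vanishes identically iff $E_S=0$. I would not pursue it, since the TV route is shorter.
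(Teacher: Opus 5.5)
Your proposal is correct and follows the paper's proof: both use $\TV(\widetilde p,p)=E_S$ from Theorem~\ref{thm:single_step_exact}, then note that a finite sum of nonnegative terms vanishes exactly when each term does. Your separate handling of $R=0$ is a sound extra step that the paper leaves implicit in the theorem's final clause.
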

\begin{proof}
Theorem~\ref{thm:single_step_exact} gives $\TV(\widetilde p,p)=E_S$.
The finite sum defining $E_S$ vanishes exactly when each nonnegative
summand vanishes.
\end{proof}

Equality $p=q$ is sufficient but not necessary. If $S=\varnothing$,
the procedure reduces to exact SD. If $S=\Vocab$, then $E_S=R$ and
every proposal is accepted, giving $\widetilde p=q$.
For nested sets $S\subseteq S'$, $E_S\le E_{S'}$; both mean proposal
acceptance and one-step TV error are therefore nondecreasing as the nucleus
expands. These statements hold at fixed $p,q$ and prefix. They do not imply
monotonic accuracy or throughput across independently generated trajectories.

\subsection{Sequence-Level Coupling Bound}
\label{app:sequence_candidate}

Let $K_t(\cdot\mid h)$ be the decoder's true distribution for its next
\emph{emitted} token given history $h$, with internal state marginalized
conditional on that history. Let $p_t(\cdot\mid h)$ be the reference target
kernel. For each position, assume a uniform bound
\begin{equation}
 \sup_h\TV\bigl(K_t(\cdot\mid h),p_t(\cdot\mid h)\bigr)
 \le\epsilon_t,\qquad 0\le\epsilon_t\le1.
 \label{eq:appendix_kernel_bound}
\end{equation}
The supremum covers histories relevant to either process; conditional
kernels must be specified there. Write $P_K^{(T)}$ and $P_p^{(T)}$ for the
joint laws of the first $T$ emitted tokens. Here $T$ is an output horizon,
not the draft block size $K$. EOS can be made absorbing so that variable-length
responses are represented within a common finite horizon.

\begin{theorem}[Conditional sequence fidelity]
\label{thm:block_fidelity}
Under Equation~\eqref{eq:appendix_kernel_bound},
\begin{equation}
 \TV\bigl(P_K^{(T)},P_p^{(T)}\bigr)
 \le1-\prod_{t=1}^T(1-\epsilon_t)
 \le\sum_{t=1}^T\epsilon_t.
 \label{eq:sequence_candidate}
\end{equation}
There exists a coupling of the two sequences with complete agreement
probability at least $\prod_{t=1}^T(1-\epsilon_t)$.
\end{theorem}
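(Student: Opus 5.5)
We construct a sequential maximal coupling of the two processes and prove the agreement bound by induction on $T$. The TV bound then follows from the coupling inequality $\TV(P,Q)\le\Pr(Y\neq Z)$, valid for any coupling $(Y,Z)$ of $P$ and $Q$. The final inequality $1-\prod_t(1-\epsilon_t)\le\sum_t\epsilon_t$ is the standard union-type bound. It follows by induction from $(1-a)(1-b)\ge 1-a-b$ for $a,b\in[0,1]$.

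\textbf{Construction.} We build $(Y_{1:T},Z_{1:T})$ with $Y\sim P_K^{(T)}$ and $Z\sim P_p^{(T)}$ step by step. At step $t$, condition on the realized histories $h=Y_{<t}$ and $h'=Z_{<t}$, and distinguish two cases.
\begin{itemize}
\item If $h=h'$, draw $(Y_t,Z_t)$ from a maximal coupling of $K_t(\cdot\mid h)$ and $p_t(\cdot\mid h)$ \citep{d3b61654b1204385b1a2e31f324d0229}. On the finite vocabulary this coupling is explicit. With probability $1-\TV$, draw a common value from the normalized overlap $\min\{K_t,p_t\}/(1-\TV)$. Otherwise, draw the two coordinates independently from the normalized positive parts. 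This gives $\Pr(Y_t=Z_t\mid h)=1-\TV(K_t(\cdot\mid h),p_t(\cdot\mid h))\ge 1-\epsilon_t$ by Equation~\eqref{eq:appendix_kernel_bound}.
\item If $h\neq h'$, draw $Y_t\sim K_t(\cdot\mid h)$ and $Z_t\sim p_t(\cdot\mid h')$ independently.
\end{itemize}
In both cases the conditional marginals of $Y_t$ given $Y_{<t}$ and of $Z_t$ given $Z_{<t}$ are the correct kernels. This needs a short check: the conditional law of $Y_t$ given the full pair of histories depends only on $Y_{<t}$. Hence, by the chain rule, the marginals of $Y_{1:T}$ and $Z_{1:T}$ are exactly $P_K^{(T)}$ and $P_p^{(T)}$.

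\textbf{Induction.} Let $A_t=\{Y_{1:t}=Z_{1:t}\}$. On $A_{t-1}$ the histories coincide, so
\[
\Pr(A_t)=\mathbb{E}\bigl[\mathbf{1}_{A_{t-1}}\Pr(Y_t=Z_t\mid Y_{<t})\bigr]\ge(1-\epsilon_t)\Pr(A_{t-1}),
\]
which yields $\Pr(A_T)\ge\prod_{t=1}^T(1-\epsilon_t)$. This is the claimed coupling, and the coupling inequality gives Equation~\eqref{eq:sequence_candidate}.

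\textbf{Main obstacle.} The delicate step is justifying that the emitted-token process of the decoder is governed by well-defined kernels $K_t(\cdot\mid h)$. The decoder carries internal state: pending draft tokens, the position within the verification block, and cached target probabilities. The next emitted token therefore depends on more than $h$. We handle this by taking $K_t(\cdot\mid h)$ as the regular conditional distribution of the next emitted token given the emitted history, with internal state marginalized, exactly as the statement stipulates. Since the joint law of the emitted sequence always factorizes by the chain rule into these conditionals, the coupling above reproduces $P_K^{(T)}$. The uniform bound over all relevant histories (the supremum) is what lets the induction proceed without tracking the internal state. Histories of probability zero under one process are covered by requiring the kernels to be specified there, as assumed. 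EOS is treated as absorbing so that both processes live on the common horizon $T$.
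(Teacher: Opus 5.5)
Your proof is correct and follows the paper's argument: maximally couple while the histories agree, use any marginal-preserving coupling after they diverge, derive $\Pr(A_t)\ge(1-\epsilon_t)\Pr(A_{t-1})$, and finish with the coupling inequality; your explicit overlap construction and marginal check just fill in details the paper leaves implicit. One small notational fix: in the induction display, condition on the pair $(Y_{<t},Z_{<t})$ rather than on $Y_{<t}$ alone, since $\mathbf{1}_{A_{t-1}}$ is not a function of $Y_{<t}$ and the displayed equality only holds with the joint conditioning.
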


\begin{proof}
Construct the sequences sequentially. Whenever their histories agree at
$h$, maximally couple $K_t(\cdot\mid h)$ and $p_t(\cdot\mid h)$.
Their next tokens then agree with probability at least $1-\epsilon_t$.
After histories diverge, use any coupling with the required conditional
marginals. If $M_t$ is the event of agreement through position $t$, then
\[
 \Pr(M_t)\ge(1-\epsilon_t)\Pr(M_{t-1}),\qquad \Pr(M_0)=1.
\]
Induction gives $\Pr(M_T)\ge\prod_t(1-\epsilon_t)$, including cases
where a factor is zero. The coupling inequality bounds TV by the probability
of disagreement. Finally, $1-\prod_t(1-\epsilon_t)\le\sum_t\epsilon_t$.
\end{proof}

\paragraph{Applying the bound to speculative blocks.}
Theorem~\ref{thm:single_step_exact} identifies the conditional error as
$E_{S_t(h)}$ only when $K_t$ is the complete one-proposal acceptance-and-recovery
kernel specified in Section~\ref{app:nsd_assumptions}. In a block decoder,
conditioning on earlier acceptances can change the law of pending proposals.
The relevant proposal law and verifier ratio must still match. Recovery
and bonus positions, cached proposals, and rollback state must also be included
in the next-output kernel; a recovery-only kernel is not generally equal to $p_t$.

A sufficient state-aware condition is that, for every reachable internal
state $z$, the next-output kernel $K_t(\cdot\mid h,z)$ has TV error at most
$\epsilon_t$ relative to the same $p_t(\cdot\mid h)$. Convexity then gives
\[
 \TV\left(\sum_z\mu_t(z\mid h)K_t(\cdot\mid h,z),p_t(\cdot\mid h)\right)
 \le\sum_z\mu_t(z\mid h)\TV\bigl(K_t(\cdot\mid h,z),p_t(\cdot\mid h)\bigr)
 \le\epsilon_t,
\]
where $\mu_t$ is the conditional state distribution. Exact target bonus
sampling has zero local error when it uses that same target conditional.
The sequence theorem is thus conditional on an output-kernel bound; it does
not establish such a bound for every implementation from token diagnostics alone.

\subsection{Uniform-Error Corollaries and Interpretation}

If $\epsilon_t\le\epsilon$ for all $t$, Theorem~\ref{thm:block_fidelity} gives
\begin{equation}
 \TV\bigl(P_K^{(T)},P_p^{(T)}\bigr)
 \le1-(1-\epsilon)^T\le\min\{1,T\epsilon\}.
\end{equation}
For $0<\delta<1$ and integer $T\ge1$, a sufficient condition for coupling
agreement at least $1-\delta$ is
\[
 \epsilon\le1-(1-\delta)^{1/T};
 \qquad\text{the simpler condition }\epsilon\le\delta/T\text{ also suffices}.
\]
For $0<\epsilon<1$, the product bound certifies this agreement whenever
$T\le\log(1-\delta)/\log(1-\epsilon)$. This is a sufficient bound on the
output horizon, not a validated prescription for choosing a speculative block size.

\begin{corollary}[Exact sequence sampling]
\label{cor:sequence_lossless}
If $K_t(\cdot\mid h)=p_t(\cdot\mid h)$ at every relevant history and position,
then $P_K^{(T)}=P_p^{(T)}$ for every finite horizon $T$. In particular, this
holds for a sequence of complete one-step NSD kernels satisfying the stated
proposal and recovery assumptions with $E_{S_t(h)}=0$ everywhere.
\end{corollary}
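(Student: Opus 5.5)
The plan is to read this corollary off Theorem~\ref{thm:block_fidelity} with every $\epsilon_t$ set to zero, and then to supply the NSD-specific zero-error condition from Corollary~\ref{cor:single_step_lossless}.

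\textbf{First claim.} Suppose $K_t(\cdot\mid h)=p_t(\cdot\mid h)$ at every relevant history and position. Then
\[
\sup_h \TV\bigl(K_t(\cdot\mid h),p_t(\cdot\mid h)\bigr)=0,
\]
so Equation~\eqref{eq:appendix_kernel_bound} holds with $\epsilon_t=0$ for all $t$. Theorem~\ref{thm:block_fidelity} then gives
\[
\TV\bigl(P_K^{(T)},P_p^{(T)}\bigr)\le 1-\prod_{t=1}^T 1=0 .
\]
Since total variation vanishes only for identical laws, $P_K^{(T)}=P_p^{(T)}$.

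As a sanity check independent of the coupling argument, I would also give the direct chain-rule proof. Both joint laws factor as
\[
\prod_{t=1}^{T} K_t(x_t\mid x_{<t}) \quad\text{and}\quad \prod_{t=1}^{T} p_t(x_t\mid x_{<t}).
\]
These factors agree at every reachable history. At an unreachable history, the preceding product is already zero under both laws, so the factor there does not matter. Induction on $T$ then gives equality. Absorbing EOS, as stipulated before Theorem~\ref{thm:block_fidelity}, handles variable-length outputs.

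\textbf{Second claim.} Here the kernels are complete one-step NSD acceptance-and-recovery kernels at each $(t,h)$, with proposal $X\sim q_t(\cdot\mid h)$, verifier ratio built from that same $q_t$, and the unchanged residual $\rho$. Under these assumptions, Theorem~\ref{thm:single_step_exact} and Corollary~\ref{cor:single_step_lossless} show that the emitted law $\widetilde p_t(\cdot\mid h)$ equals $p_t(\cdot\mid h)$ exactly when $E_{S_t(h)}=0$. The case $R=0$ is covered by the $R=0$ clause of Theorem~\ref{thm:single_step_exact}. Hence $K_t(\cdot\mid h)=p_t(\cdot\mid h)$ everywhere, and the first claim applies.

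\textbf{Main obstacle.} The step needing care is justifying that the one-step kernel really is the marginal next-emitted-token kernel in a block decoder. Recovery positions, bonus positions and internal state $z$ must all be accounted for. I would handle this with the state-aware sufficient condition from the remarks after Theorem~\ref{thm:block_fidelity}:
\begin{itemize}
\item each state-conditioned kernel $K_t(\cdot\mid h,z)$ is either an NSD step with $E=0$, which is exact by Corollary~\ref{cor:single_step_lossless}, or an exact target bonus draw from $p_t(\cdot\mid h)$;
\item by convexity of TV (equivalently, by linearity of the mixture $\sum_z\mu_t(z\mid h)K_t(\cdot\mid h,z)$ when every component equals $p_t$), the marginal kernel equals $p_t(\cdot\mid h)$.
\end{itemize}
Since the corollary explicitly says the kernels satisfy \emph{the stated proposal and recovery assumptions}, I would state this as the interpretation of that hypothesis rather than attempt a stronger claim about arbitrary implementations.
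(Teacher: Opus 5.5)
Your proposal is correct and follows the paper's own proof, which consists solely of setting every $\epsilon_t=0$ in Theorem~\ref{thm:block_fidelity}. Your chain-rule check and your explicit derivation of the ``in particular'' clause are valid elaborations the paper leaves implicit. That derivation uses Corollary~\ref{cor:single_step_lossless} together with the $R=0$ clause of Theorem~\ref{thm:single_step_exact}, and reads the hypothesis as complete acceptance-and-recovery kernels, matching the paper's remarks on block decoders.
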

\begin{proof}
Set every $\epsilon_t=0$ in Theorem~\ref{thm:block_fidelity}.
\end{proof}

The agreement statement concerns an explicitly constructed coupling, not
independently sampled sequences or runs sharing an arbitrary random seed.
It is not a statistical confidence level or a guarantee of unchanged task
accuracy. For a fixed correctness event $A$, a valid sequence-TV bound does
imply $|P_K^{(T)}(A)-P_p^{(T)}(A)|\le\TV(P_K^{(T)},P_p^{(T)})$; however,
using that inequality requires a justified sequence-level bound.
Diagnostic averages over observed nucleus-accepted tokens do not supply
the uniform bounds in Equation~\eqref{eq:appendix_kernel_bound}.

\section{Method conventions and implementation correspondence}
\label{app:method_candidate}

\paragraph{Cumulative-mass boundaries.}
The shared gate sums transformed target mass over tokens with strictly greater target logits.
Under an unrestricted positive-temperature softmax, this ordering agrees with Equation~\ref{eq:nsd_nucleus}.
All equally scored tokens share membership, so the result need not be the smallest sorted set reaching the threshold. Ordinary top-p sampling is a separate transformation that truncates and renormalizes a distribution. NSD's nucleus threshold does not truncate replacement or bonus sampling.
The MTP adapter accepts thresholds only in $(0,1]$. The zero-threshold examples describe the mathematical/shared membership convention.

\paragraph{DSpark.}
The proposal routine returns sampled tokens and associated sampling logits, including the optional Markov-head path.
The backbone uses a noncausal draft pass, while the target evaluates the proposed prefix causally.
An optional confidence gate can select a prefix after sampling.
When interpreting $q_i$ as an actual proposal law, such selection must be disabled or its conditioning accounted for.
The historical confidence and checkpoint settings are not recoverable from the current source alone.

\paragraph{Nemotron linear self-speculation.}
The sampled path uses a bidirectional draft pass with LoRA enabled and a causal target pass with adapters disabled.
It samples proposal positions from the draft probabilities and treats the target seed separately.
The cache is cropped to the seed plus accepted drafts. Output wrappers trim EOS and length-budget overshoot.
The reported SD configuration can invoke an external official generation routine, so equivalence to the local sampled path with nucleus-based acceptance disabled requires that external source and historical configuration.

\paragraph{Qwen3.5-MTP.}
The adapter invokes the vLLM rejection kernel one proposal at a time and retains its replacement output when the nucleus-based condition is not satisfied.
Its gate takes a softmax of the target-logit row after the adapter's processors. The standard sampler may apply further transformations.
The reported temperature-one, unrestricted policy is the correspondence considered in the main text.
The adapter supports one active sequence and delegates proposal generation, bonus sampling, stopping, and caches to vLLM.
The historical external kernel is unavailable locally. Adapter inspection does not certify those branches.

\paragraph{Numerical and state conventions.}
The shared ratio uses $\max(q_i(x_i),10^{-8})$ in the denominator, and residual mass at most $10^{-8}$ triggers sampling from $p_i$.
The exact acceptance--deviation identity assumes neither modification and requires residual recovery independent of the rejected token's identity.
Symbolic trace checks verify the example's first rejection, full acceptance, EOS, and budget behavior. They do not execute backend caches or GPU kernels.



\section{Mechanism comparison with closely related work}
\label{app:mechanism_comparison}
The comparison below separates the rule for retaining a proposal from the rule
for generating the next token after a failure. This distinction matters because
an acceptance criterion alone does not determine an output distribution.
Entries describe the cited formulations, not interchangeable implementations
or a controlled performance comparison.

\begin{table}[htbp]
\centering\small
\renewcommand{\arraystretch}{1.16}
\begin{tabular}{@{}>{\raggedright\arraybackslash}p{0.17\linewidth}>{\raggedright\arraybackslash}p{0.37\linewidth}>{\raggedright\arraybackslash}p{0.39\linewidth}@{}}
\toprule
Method & Selection or intervention & Recovery and scope \\
\midrule
SD & Stochastic ratio acceptance $\min(1,p/q)$. & Sample from normalized $[p-q]_+$ at rejection; preserves the chosen target law under the usual assumptions. \\
NSD & Retain ratio acceptance and accept in-nucleus failures. & Keep the same residual; local added acceptance and TV error equal $E_S$. No request-level budget. \\
ASD \citep{feng2026asd} & Greedy exceptions constrained by logit regret, a block cap, and a persistent request budget. & Target argmax recovery or bonus under the committed prefix. Zero budget restores greedy verification; regret accounting is not an NSD TV bound. \\
MARS \citep{song2026} & Accept the target top-1 token, or its top-2 token when the raw-logit ratio $z_{(2)}/z_{(1)}>\theta$. & On failure, emit the target top-1 token and discard the remaining draft. The stated rule relaxes token matching; it does not use the stochastic $[p-q]_+$ residual. \\
Cactus \citep{hao2026} & Select an adjusted distribution under a divergence constraint. & Uses generalized acceptance/recovery equations. NSD instead fixes the original residual and analyzes the induced deviation. \\
\bottomrule
\end{tabular}
\caption{Mechanism distinctions. Here $p$ and $q$ always denote target and actual
proposal laws; notation used in other papers is not imported into these columns.}
\label{tab:mechanism_comparison}
\end{table}

\FloatBarrier
\section{Additional Proposal Mechanisms}
\label{app:additional_proposals}

We compare SD with NSD at an
acceptance threshold of $0.95$ using DFlash-b16 and EAGLE-3-TTT7 with Qwen3-4B as the target.
Both comparisons use temperature $1.0$, target sampling top-$p=1.0$,
and five seeds.

\begin{table}[htbp]
\centering
\small
\setlength{\tabcolsep}{5pt}
\caption{Task performance and mean accepted length for additional proposal mechanisms.
Task performance is mathematical accuracy or functional pass@1 (\%), reported as
the mean over five seeds. $\tau$ includes the
resample or bonus token.}
\label{tab:additional_proposal_quality}
\begin{tabular}{llrrrr}
\toprule
& & \multicolumn{2}{c}{Acc. (\%)} & \multicolumn{2}{c}{$\tau$} \\
\cmidrule(lr){3-4}\cmidrule(lr){5-6}
Proposal & Benchmark & SD & NSD & SD & NSD \\
\midrule
DFlash-b16 & MATH500 & 83.88 & 84.28 & 5.75 & 6.32 \\
& AIME25 & 17.33 & 18.67 & 4.61 & 5.13 \\
& HumanEval & 84.51 & 83.17 & 5.48 & 6.20 \\
& MBPP & 62.41 & 63.50 & 4.77 & 5.29 \\
\midrule
EAGLE-3-TTT7 & MATH500 & 83.64 & 83.32 & 4.54 & 5.32 \\
& AIME25 & 20.00 & 20.00 & 3.83 & 4.78 \\
& HumanEval & 84.39 & 83.90 & 4.16 & 4.72 \\
& MBPP & 62.65 & 61.63 & 3.70 & 4.16 \\
\bottomrule
\end{tabular}
\end{table}

\begin{table}[htbp]
\centering
\small
\caption{Output throughput (tokens/s) under additional proposal mechanisms.
DFlash reports per-benchmark results for seed 30.
Both methods ran unusually slowly on this seed; these are observed
throughputs, not measurements under verified uncontended conditions.
EAGLE-3 reports per-benchmark throughput for uncontended seeds 3, 7, and 22.}
\label{tab:additional_proposal_throughput}
\begin{tabular}{llrr}
\toprule
Proposal & Benchmark & SD & NSD \\
\midrule
DFlash-b16 & MATH500 & 115.92 & 140.77 \\
& AIME25 & 87.48 & 115.15 \\
& HumanEval & 118.54 & 141.18 \\
& MBPP & 102.36 & 121.65 \\
\midrule
EAGLE-3-TTT7 & MATH500 & 157.25 & 188.61 \\
& AIME25 & 115.63 & 155.62 \\
& HumanEval & 153.56 & 184.06 \\
& MBPP & 139.97 & 164.93 \\
\bottomrule
\end{tabular}
\end{table}

\subsection{MTP with Two Proposal Tokens}
\label{app:mtp_block2}

We additionally report the Qwen3.5-4B MTP comparison \citep{qwenteam2026qwen35omnitechnicalreport} with two
proposal tokens per block using the vLLM backend. Temperature is $1.0$, target sampling top-$p$ is $1.0$, and the NSD
acceptance threshold is $0.95$.

\begin{table}[htbp]
\centering
\small
\caption{Qwen3.5-4B MTP results with block size $K=2$.
Acc. is mean accuracy or pass@1 (\%);
$\tau$ is weighted by verification-block counts.
The maximum committed length is three, including the continuation token.}
\label{tab:mtp_block2}
\begin{tabular}{lrrrr}
\toprule
& \multicolumn{2}{c}{Acc. (\%)} & \multicolumn{2}{c}{$\tau$} \\
\cmidrule(lr){2-3}\cmidrule(lr){4-5}
Benchmark & SD & NSD & SD & NSD \\
\midrule
MATH500 & 92.73 & 92.33 & 2.63 & 2.95 \\
AIME25 & 35.56 & 34.44 & 2.57 & 2.94 \\
HumanEval & 78.25 & 80.89 & 2.55 & 2.95 \\
MBPP & 55.12 & 57.85 & 2.42 & 2.92 \\
\bottomrule
\end{tabular}
\end{table}

The throughput for NSD is 179.90 tokens/s, which is $1.70\times$ the SD throughput of 105.90 tokens/s.

\FloatBarrier
\section{Limitations}
\label{sec:limitations}

Although our experiments demonstrate the effectiveness of Nucleus Speculative Decoding (NSD) across multiple settings, one limitation of the current study is the range of model sizes evaluated. Because of hardware constraints and the availability of Nemotron and DSpark checkpoints, our evaluation focuses primarily on relatively compact target models and corresponding draft models.
\end{document}